\theoremstyle{plain}
\newtheorem{theorem}{Theorem}
\title{mPOLICE: Provable Enforcement of Multi-Region Affine Constraints in Deep Neural Networks}
\author{%
  Mohammadmehdi Ataei \\
  Autodesk Research \\
  Toronto, Canada \\
  \texttt{mehdi.ataei@autodesk.com} \\
  \And
  Hyunmin Cheong \\
  Autodesk Research \\
  Toronto, Canada \\
  \texttt{hyunmin.cheong@autodesk.com} \\
  \And
  Adrian Butscher \\
  Autodesk Research \\
  Toronto, Canada \\
  \texttt{adrian.butscher@autodesk.com} \\
}
\begin{document}

\maketitle

\begin{abstract}
    Deep neural networks are increasingly used in safety-critical domains such as robotics and scientific modeling, where strict adherence to output constraints is essential. Methods like POLICE, which are tailored for single convex regions, face challenges when extended to multiple disjoint regions, often leading to constraint violations or unwanted affine behavior across regions. This paper proposes mPOLICE, a new approach that generalizes POLICE to provably enforce affine constraints over multiple disjoint convex regions. At its core, mPOLICE assigns distinct neuron activation patterns to each constrained region, enabling localized affine behavior and avoiding unintended generalization. This is implemented through a layer-wise optimization of the network parameters. Additionally, we introduce a training algorithm that incorporates mPOLICE into conventional deep learning pipelines, balancing task-specific performance with constraint enforcement using periodic sign pattern enforcement. We validate the flexibility and effectiveness of mPOLICE through experiments across various applications, including safety-critical reinforcement learning, implicit 3D shape representation with geometric constraints, and fluid dynamics simulations with boundary condition enforcement. Importantly, mPOLICE incurs no runtime overhead during inference, making it a practical and reliable solution for constraint handling in deep neural networks.
\end{abstract}

\section{Introduction}
Deep neural networks (DNNs) have achieved remarkable success in a wide range of domains, from computer vision and natural language processing to scientific simulations and decision-making tasks. However, many real-world applications require these models to produce outputs that satisfy strict constraints. Such constraints often arise from domain knowledge, safety requirements, physical laws, or regulatory guidelines. For example, in climate modeling and fluid simulations, boundary conditions must hold to ensure physically plausible predictions~\citep{PhysRevLett.126.098302,XIE2024117223}; and in robotics, guaranteeing feasible, collision-free trajectories is critical for safety~\citep{kondo2024cgd,bouvier2024policed,bouvier2024learning}.

However, enforcing hard constraints within DNNs is difficult. Traditional training approaches and architectures do not guarantee that constraints will be satisfied, often relying on soft penalties, data augmentation, or post-processing techniques that do not offer any provable guarantees~\citep{kotary2021end,kotary2024learning}. Moreover, strategies that rely on sampling-based corrections or complicated architectures can degrade performance and robustness, or fail to scale efficiently to high-dimensional spaces and complex constraints~\citep{Li_2018_ECCV,tordesillas2023rayen}.

The POLICE algorithm~\citep{police} offers provably optimal affine constraint enforcement in DNNs within a single convex input region by adjusting network biases, without inference overhead or sacrificing expressiveness outside that region. It has been successfully applied in reinforcement learning for safety guarantees~\citep{bouvier2024policed,bouvier2024learning}. However, POLICE is limited to one convex region; naive extension to multiple regions can cause conflicts or unintended affine behavior over their convex hull (see Figure \ref{fig:convexhull}). This limitation has been noted as a key challenge for future research in both the POLICE paper and in follow-up works in robotics and RL~\citep{bouvier2024policed,bouvier2024learning,police}.

In this paper, we present a novel extension of POLICE, referred to as mPOLICE, that overcomes this limitation and enables the exact enforcement of affine constraints in multiple disjoint convex regions simultaneously. Our key insight is to assign unique activation patterns to each constrained region. By doing so, we ensure that each region is distinguished in the network's internal representation, preventing unwanted affine behavior across combined regions. We build on the rigorous theoretical foundation provided by the original POLICE framework~\citep{police}, and integrate recent advances in constrained optimization with deep learning~\citep{kotary2021end,kotary2024learning,PhysRevLett.126.098302,Li_2018_ECCV,tordesillas2023rayen,bouvier2024learning,zhong2023neural} to achieve robust and reliable constraint enforcement.

\textbf{Our contributions can be summarized as follows:}
\begin{itemize}
    \item We introduce mPOLICE, featuring an algorithm to assign unique neuron activation patterns for each constrained region, accompanied by a theoretical guarantee for provably localized affine enforcement.
    \item We provide a robust algorithm (Algorithm~\ref{alg:fine_tuning_mpolice}) to integrate mPOLICE into standard deep learning training for both equality and inequality constraint enforcement.
    \item We demonstrate mPOLICE's versatility across diverse applications, including safety-critical reinforcement learning, implicit 3D shape learning with geometric constraints, enforcing boundary conditions in fluid dynamics, regression, classification, and enforcement in non-convex regions via approximation.
\end{itemize}

\begin{figure*}[ht]
    \centering
    \includegraphics[width=0.9\textwidth]{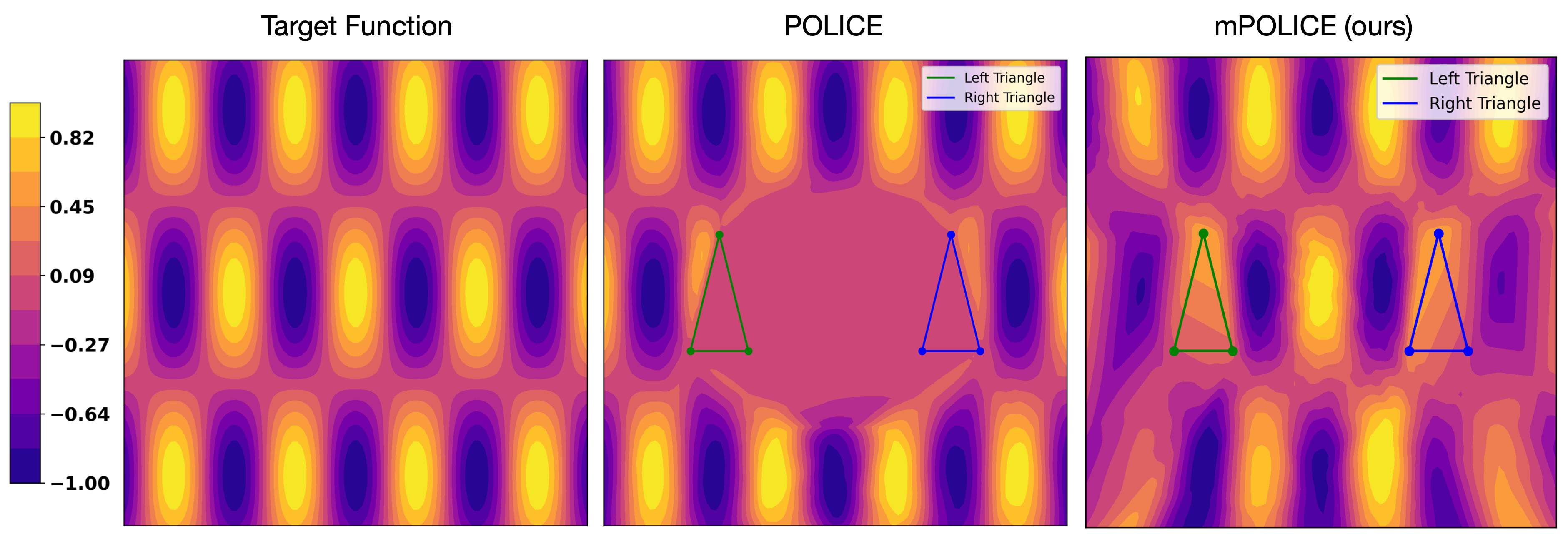}
    \caption{POLICE (single-region) vs. mPOLICE (multi-region) enforcement. POLICE uses one activation pattern, making the network affine over the combined convex hull of all regions (middle). mPOLICE assigns unique patterns to each region, localizing affine behavior and preventing unintended affine behavior across regions (right).}
    \label{fig:convexhull}
\end{figure*}

\subsection{Related Work}
Integrating constraints into neural networks has a history spanning various contexts. Early work includes using neural networks with penalty methods for analog circuit optimization~\citep{286888,995659}, and foundational research in applied dynamic programming connected neural representations to optimization~\citep{bellman2015applied}. More recently, combinatorial optimization for complex constrained problems~\citep{kotary2021end,kotary2024learning}, has also utilized generative models to guide the optimization process~\citep{giannone2023aligning, picard2024generative}, addressing challenges involving physical or domain-specific constraints~\citep{PhysRevLett.126.098302,doi:10.1137/21M1397908,XIE2024117223,pmlr-v168-djeumou22a}.

Beyond penalty methods, direct architectural enforcement of constraints has emerged. This includes ensuring monotonicity, convexity, or linear output constraints~\citep{Li_2018_ECCV,tordesillas2023rayen,konstantinov2024imposing,zhong2023neural}. Physics-informed neural networks (PINNs) embed physical differential constraints into training~\citep{NEURIPS2021_df438e52,NEURIPS2021_d5ade38a}, while other methods use affine or inequality constraints for safe predictions~\citep{kondo2024cgd,bouvier2024policed,bouvier2024learning}. For Bayesian optimization, transformer-based models predict the expected improvements for constraints \citep{yu2024fast}, leveraging their ability for Bayesian inference \citep{muller2021transformers}.

A significant body of work focuses on modifying neural network parameters to ensure their outputs satisfy specific properties, typically by constraining the output to lie within a target set or polytope for given input regions. While methods like those by \citet{tao2023architecture,tao2024provable} provably ensure network outputs lie within target sets for given input regions, our work, mPOLICE, and its predecessor POLICE, uniquely focus on making the network function itself exactly affine (linear) within multiple disjoint input regions by explicitly controlling neuron activation patterns. This distinction in objective (local affine functional form vs. output value containment) and mechanism (activation control vs. output bounding) sets our approach apart, enabling precise local linear behavior.

\section{Methodology}
\label{sec:methodology}

\subsection{Piecewise Affine Structure of ReLU Networks}

A feedforward ReLU network defines a continuous piecewise affine function. Formally, each layer $\ell$ computes

\begin{equation}
\boldsymbol{z}^{(\ell)}=\boldsymbol{W}^{(\ell)}\boldsymbol{x}^{(\ell)}+\boldsymbol{b}^{(\ell)}, \quad \boldsymbol{x}^{(\ell+1)}=\sigma(\boldsymbol{z}^{(\ell)}),
\end{equation}

where $\boldsymbol{W}$ and $\boldsymbol{b}$ are the layer $\ell$'s weights and biases, with $\boldsymbol{x}^{(1)}=\boldsymbol{x}$ where $\boldsymbol{x}$ is input to the network and $\sigma(u)=\max(u,0)$. Each ReLU neuron introduces half-space constraints splitting the input domain into two regions depending on its sign. Stacking $L$ layers yields a finite set of simultaneously satisfiable inequalities that produce a finite collection of $T$ convex polytopes $\{\mathcal{R}_r\}_{r=1}^T$~\citep{montufar2014number}. On each such polytope, the activation pattern is fixed, making $f_{\boldsymbol{\theta}}(\boldsymbol{x})$ an affine function $\boldsymbol{A}_r \boldsymbol{x} + \boldsymbol{c}_r$. This piecewise affine property is central: ensuring each region $R_i$ lies entirely within one such polytope guarantees that $f_{\boldsymbol{\theta}}$ is affine on $R_i$. Note that the same property is applicable to any network with linear or piecewise linear activations (e.g., Leaky-ReLU).

\subsection{Problem Setup and Preliminaries}
\label{subsec:problem_setup}
Consider a deep neural network $f_{\boldsymbol{\theta}}:\mathbb{R}^D \to \mathbb{R}^K$ with parameters $\boldsymbol{\theta}$. Assume there are $N$ disjoint convex polytopal regions $\{R_i\}_{i=1}^N$. Each region $R_i$ can be described by a finite set of vertices $\{\boldsymbol{v}_p^{(i)}\}_{p=1}^{P_i}$. Enforcing constraints in these regions involves ensuring that  $\forall \boldsymbol{x}\in R_i$, the network outputs $f_{\boldsymbol{\theta}}(\boldsymbol{x})$ satisfy certain linear conditions, such as
\begin{align}
    &\boldsymbol{E}_i f_{\boldsymbol{\theta}}(\boldsymbol{x}) = \boldsymbol{f}_i.\label{eqn:equality} \\
    &\boldsymbol{C}_i f_{\boldsymbol{\theta}}(\boldsymbol{x}) \leq \boldsymbol{d}_i,\label{eqn:inequality}
\end{align}
These combined constraints can encode important domain knowledge. The key difficulty is that simply sampling $f_{\boldsymbol{\theta}}$ cannot guarantee constraint satisfaction at all $\boldsymbol{x}\in R_i$.

A solution can be to impose \emph{affinity} over $R_i$ by restricting each region $R_i$ to a unique affine polytope $\mathcal{R}_r$. Then, $f_{\boldsymbol{\theta}}$ becomes a linear function on $R_i$:
\begin{equation}
    f_{\boldsymbol{\theta}}(\boldsymbol{x}) \;=\;
    \mathbf{\Lambda}_i \,\boldsymbol{x} \;+\; \boldsymbol{\gamma}_i, 
    \quad \boldsymbol{x} \in R_i,
\end{equation}

An affine constraint over $R_i$ (e.g., Equations \ref{eqn:equality} or \ref{eqn:inequality}) then only needs to be checked on the \emph{finite} set of vertices $\{\mathbf{v}_p^{(i)}\}_{p=1}^{P_i}$. This reduces the infinite-dimensional verification to a finite set of linear equations or inequalities:
\begin{equation}
\label{eq:vertex_constraints}
    \boldsymbol{E}_i\,(\mathbf{\Lambda}_i\,\mathbf{v}_p^{(i)} + \boldsymbol{\gamma}_i)
    \;=\;
    \boldsymbol{f}_i,
    \quad
    \text{or}
    \quad
    \boldsymbol{C}_i\,(\mathbf{\Lambda}_i\,\mathbf{v}_p^{(i)} + \boldsymbol{\gamma}_i)
    \;\le\;
    \boldsymbol{d}_i.
\end{equation}

However, standard neural network training offers no guarantee that $f_{\boldsymbol{\theta}}$ will be affine on any specific $R_i$, nor that $R_i$ will naturally align with a single affine polytope of the piecewise affine decomposition induced by the network's (Leaky)-ReLU activations. This inherent lack of alignment presents the primary challenge.

\subsection{From Single to Multiple Regions and the Convex Hull Problem}
\label{subsec:from_single_to_multi_regions}

The original POLICE algorithm \citep{police} was designed to ensure the exact affine behavior of a deep ReLU network $f_{\boldsymbol{\theta}}$ within a single convex region $R$. By enforcing consistent pre-activation sign patterns across all vertices of that region, the algorithm guarantees that $R$ is contained within a single activation polytope of the network's piecewise affine decomposition (this is a known property of such networks~\citep{montufar2014number}). At a high level, given $R = \{\boldsymbol{v}_1, \ldots, \boldsymbol{v}_P\}$, the algorithm identifies a binary sign pattern $\boldsymbol{s} = (s_1, \ldots, s_{N_\ell})$ for the $N_\ell$ neurons in layer $\ell$, and adjusts the parameters so that:
\begin{equation}
\label{eq:police_single_region_constraint}
0 \leq \min_{p \in [P]} (\boldsymbol{H}_{p,k}^{(\ell)} s_k), \quad \text{for all } k \in \{1, \ldots, N_\ell\} \,,
\end{equation}
where $\boldsymbol{H}^{(\ell)} \triangleq \boldsymbol{V}^{(\ell)}(\boldsymbol{W}^{(\ell)})^T + \mathbf{1}_{P}(\boldsymbol{b}^{(\ell)})^{T}$ is the pre-activation matrix of layer $\ell$ over the vertices of $R$. Here, $s_k \in \{-1,+1\}$ encodes on which side of the hyperplane defined by the $k$-th neuron (of layer $\ell$) the region $R$ is placed. By ensuring that all vertices share the same sign pattern, $R$ is effectively ``trapped'' inside a single affine polytope of the network. As a result, $f_{\boldsymbol{\theta}}$ behaves as a linear (affine) function on $R$. However, this approach implicitly assumes a single region. Extending it to multiple disjoint regions $\{R_i\}_{i=1}^{N}$ by applying its logic independently to each leads to a problem: multiple regions may be assigned the same or compatible sign patterns. If regions share an activation pattern, the network becomes affine not just on individual regions, but on their entire convex hull—the \emph{convex hull problem}.

This issue stems from POLICE's minimum operation in Equation \ref{eq:police_single_region_constraint}. By seeking a non-negative minimum pre-activation across all vertices in a set, it finds a single activation pattern for them, effectively creating one encompassing affine polytope. When applied to multiple regions, this either merges them into one polytope (if vertices are combined) or only constrains the last region processed (if applied sequentially).

Our work overcomes this by assigning a unique global activation sign pattern to each distinct convex region $R_i$. This assignment \textbf{provably} guarantees that each $R_i$ is confined to its own affine polytope, distinct from others (see Theorem 1 in the technical appendix). This ensures localized affine behavior on each $R_i$, preventing unwanted affine extension over the convex hull of multiple regions and maintaining local constraint integrity.

\subsection{Problem Formulation: Multi-Region Sign Assignment and Parameter Adjustments}
\label{subsec:problem_formulation_multi_region}

We now formulate the general problem of assigning unique sign patterns to multiple disjoint convex regions and adjusting the network parameters accordingly. Suppose we have a feedforward ReLU network $f_{\boldsymbol{\theta}}$ of depth $L$ with parameters $\boldsymbol{\theta}$ (the weights and biases). Let $\{R_i\}_{i=1}^N$ be the set of $N$ disjoint convex regions, each described by its vertices $\boldsymbol{v}_p^{(i)}$. We wish to ensure that each region $R_i$ is contained in a distinct affine polytope of the piecewise affine decomposition induced by the network.

Concretely, we introduce sign variables
\begin{equation}  
    \text{sign}_{n}^{(i,\ell)} \;\in\; \{+1,\,-1\},
\end{equation}

where $\ell \in \{1,\dots,L-1\}$ indexes the layer and $n$ indexes the neuron in layer $\ell$. The sign variable $\text{sign}_{n}^{(i,\ell)}$ encodes that region $R_i$ is placed entirely in the half-space defined by
\begin{equation}
\text{sign}_{n}^{(i,\ell)} 
\left(
    \mathbf{w}_{n}^{(\ell)\top}\,\mathbf{v}_p^{(i,\ell)} 
    + 
    b_n^{(\ell)}
\right) 
\ge \delta,
\quad
\forall\,p \in \{1,\dots,P_i\}
\end{equation}
with $\mathbf{v}_p^{(i,\ell)}$ denoting the vertices after passing through $\ell - 1$ layers and $\delta \ge 0$ a small margin. To force each $R_i$ into a \emph{unique} activation polytope, no two regions may share the same global sign pattern across all neurons and layers.

Formulating these requirements as constraints, we can define the following non-convex optimization problem:
\begin{align*}
&\min_{\{\mathbf{w}_n^{(\ell)},\,b_n^{(\ell)},\,\text{sign}_{n}^{(i,\ell)}\}}
\quad
\Phi\bigl(\boldsymbol{\theta}\bigr)
\quad
\text{subject to}\\
&\quad \text{sign}_{n}^{(i,\ell)}
    \Bigl(
        \mathbf{w}_{n}^{(\ell)\top}\,\mathbf{v}_p^{(i,\ell)} 
        + b_{n}^{(\ell)}
    \Bigr)
    \;\ge\;
    \delta,
    \;\;
    \forall\,p,i,n,\ell,\\
&\quad \exists\, n,\ell \;\text{such that}\ \;
    \text{sign}_{n}^{(i,\ell)}
    \;\neq\;
    \text{sign}_{n}^{(j,\ell)},
    \;\;
    \forall\,i \neq j.
\end{align*}
where $\Phi(\boldsymbol{\theta})$ is an objective function reflecting the primary learning task plus regularization terms. The two sets of constraints can be described as \emph{region-consistency constraints} and \emph{uniqueness constraints}, respectively. The former enforces that for each region $R_i$, all its vertices consistently adhere to the assigned sign pattern $\text{sign}_{n}^{(i,\ell)}$ for each neuron $(n,\ell)$, while the latter enforces that no two distinct regions share the exact same global sign pattern across all neurons and layers.

Solving this problem, which takes the form of a mixed-integer problem assuming the sign variables are binary, is NP-hard. Hence, in practice, we can employ heuristics to determine $\text{sign}_{n}^{(i,\ell)}$ first, and then solve simpler sub-problems (e.g., quadratic or linear programs) to enforce the assigned half-space constraints by adjusting $\{\mathbf{w}_n^{(\ell)}, b_n^{(\ell)}\}$ at each layer separately.

\subsection{Strategies for Sign Assignment}
\label{subsec:assigning_unique_patterns}
We propose two heuristic methods for determining each region's signs:

\paragraph{Majority Voting.} 
For each region $R_i$, we examine its vertex pre-activations $\{z_n^{(\ell)}(\mathbf{v}_p^{(i)})\}$ at layer $\ell$. We then set $\text{sign}_n^{(i,\ell)} = +1$ if the majority of $\{z_n^{(\ell)}(\mathbf{v}_p^{(i)})\}$ are positive; otherwise, we choose $-1$. Zeros are treated as positive if they appear. This is a simple, low-cost strategy and often provides reliable region separation, especially when each neuron has a clear tendency to be either positive or negative over $R_i$.
    
\paragraph{Pre-Activation Mean-based.} For each region $R_i$ and neuron $n$ in layer $\ell$, we compute the average of the pre-activations over the vertices of $R_i$. Specifically, let
\begin{align*}
m_{n}^{(i,\ell)} &= \frac{1}{P_i} \sum_{p=1}^{P_i} z_n^{(\ell)}\bigl(\mathbf{v}_p^{(i)}\bigr)
\end{align*}
where $z_n^{(\ell)}(\boldsymbol{x}) = \mathbf{w}_n^{(\ell)\top}\boldsymbol{x} + b_n^{(\ell)}$. We then set $\text{sign}_{n}^{(i,\ell)} = +1$ if $m_{n}^{(i,\ell)} \ge 0$ and $-1$ otherwise. When $m_{n}^{(i,\ell)}$ is extremely close to zero, we may impose a small margin to avoid sign ambiguity.

Selecting the right approach depends on how pre-activations distribute across vertices. The mean-based method works well when they cluster around distinct positive or negative values, making outliers less influential. Majority voting is more suitable if nearly all vertices share the same sign. It is robust to small sets of outliers but can become unstable if the region straddles the boundary, where a near-even split may flip the result.

After assigning initial sign patterns $\{\text{sign}_n^{(i,\ell)}\}$ to each region $R_i$, we ensure no two regions share the same global pattern. If duplicates occur, often due to nearby constrained regions, we flip the signs of a few neurons (with near-zero pre-activations) in one layer of one region to guarantee uniqueness.

\subsection{Enforcing Sign Patterns}
\label{subsec:enforcing_signs}
Assigned sign patterns dictate target polytopes but don't guarantee parameter compliance. A bias-only adjustment strategy, as used in POLICE \citep{police}, proves insufficient for multiple disjoint regions (see proof in the technical appendix). Thus, we must adjust both weights and biases to ensure each region $R_i$ lies in a distinct polytope. To solve this issue, we can solve a small quadratic (or linear) program to fine-tune both $\mathbf{w}_n^{(\ell)}$ and $b_n^{(\ell)}$ with minimal parameter shifts. The quadratic objective (minimizing the L2 norm of parameter changes) is often preferred as it encourages smaller, more distributed adjustments, which can be less disruptive to the learned representations compared to an L1 objective that might induce sparser but larger changes. Concretely, to enforce the assigned signs, for each neuron $n$ in layer $\ell$, we find the minimal parameter adjustments $\Delta \mathbf{w}_n^{(\ell)}$ and $\Delta b_n^{(\ell)}$ to its current parameters (denoted $\mathbf{w}_n^{(\ell)}$ and $b_n^{(\ell)}$ respectively) by solving:
\begin{align*}
    &\min_{\Delta \mathbf{w}_n^{(\ell)},\, \Delta b_n^{(\ell)}}
    \quad
    \|\Delta \mathbf{w}_n^{(\ell)}\|^2 + (\Delta b_n^{(\ell)})^2 \\
    &\text{subject to} \\
    &\quad \text{sign}_n^{(i,\ell)} \,
    \Bigl(
      (\mathbf{w}_n^{(\ell)} + \Delta\mathbf{w}_n^{(\ell)})^\top \mathbf{v}_p^{(i,\ell)}
      + (b_n^{(\ell)} + \Delta b_n^{(\ell)})
    \Bigr) \,\ge\, \delta,
    \quad \forall p, i.
\end{align*}
This yields a minimal-norm update to each neuron's parameters that enforces the assigned signs exactly.

\subsection{Algorithm for Imposing Affine Constraints during Training}
\label{subsec:imposingconst}
Once the strategies from Section~\ref{subsec:assigning_unique_patterns} have assigned unique sign patterns to each region $R_i$, and the methods from Section~\ref{subsec:enforcing_signs} have adjusted network parameters to ensure $R_i$ lies within a unique polytope, the network $f_{\boldsymbol{\theta}}$ behaves as an affine function in $R_i$ having the form $\boldsymbol{\Lambda}_i \boldsymbol{x} + \boldsymbol{\gamma}_i$ for some $\boldsymbol{\Lambda}_i \in \mathbb{R}^{K \times D}$ and $\boldsymbol{\gamma}_i \in \mathbb{R}^K$. Consequently, as noted in Section~\ref{subsec:problem_setup}, the affine constraints (Equations~\ref{eqn:equality} and~\ref{eqn:inequality}) only need to be satisfied at the vertices $\{\boldsymbol{v}_p^{(i)}\}_{p=1}^{P_i}$ of each region.

To achieve this during training, we employ an iterative fine-tuning process, detailed in Algorithm~\ref{alg:fine_tuning_mpolice}. This procedure alternates between optimizing a composite loss function and re-enforcing the predetermined sign patterns. The composite loss is defined as:
\begin{align*}
\mathcal{L}_{\text{total}} = \mathcal{L}_{\text{task}} + \lambda_{\text{constraint}} \mathcal{L}_{\text{constraint}}
\end{align*}
where $\mathcal{L}_{\text{task}}$ is the primary loss for the learning task (e.g., cross-entropy or mean squared error), and $\mathcal{L}_{\text{constraint}}$ penalizes violations of the affine constraints at the vertices. Specifically, for equality constraints $\boldsymbol{E}_i f_{\boldsymbol{\theta}}(\boldsymbol{v}_p^{(i)}) = \boldsymbol{f}_i$, the penalty is $\sum_{i,p} \|\boldsymbol{E}_i f_{\boldsymbol{\theta}}(\boldsymbol{v}_p^{(i)}) - \boldsymbol{f}_i\|^2$. For inequality constraints $\boldsymbol{C}_i f_{\boldsymbol{\theta}}(\boldsymbol{v}_p^{(i)}) \le \boldsymbol{d}_i$, the penalty is $\sum_{i,p} \|\max(\boldsymbol{0}, \boldsymbol{C}_i f_{\boldsymbol{\theta}}(\boldsymbol{v}_p^{(i)}) - \boldsymbol{d}_i)\|^2$. The factor $\lambda_{\text{constraint}}$ is a dynamically adjusted weight.

The constraint enforcement (Algorithm~\ref{alg:fine_tuning_mpolice}) typically follows an initial training phase where the network learns the primary task's overall patterns, establishing a base model. The subsequent fine-tuning stage integrates the affine constraints. It starts by assigning and enforcing a unique sign map $\mathcal{M}_{\text{signs}}$ for all constrained regions (Section~\ref{subsec:assigning_unique_patterns}). Each fine-tuning epoch then performs mini-batch gradient descent on a composite loss $\mathcal{L}_{\text{total}}$. Crucially, this $\mathcal{M}_{\text{signs}}$ is re-enforced after each epoch's parameter updates (Section~\ref{subsec:enforcing_signs}) to counteract optimizer drift and ensure each $R_i$ stays within its designated affine polytope, preserving multi-region affinity. The penalty $\lambda_{\text{constraint}}$ is increased if violation $V$ persists above $\epsilon$. Optionally, data within equality-constrained regions can be filtered from $\mathcal{L}_{\text{task}}$ to prevent conflicting signals.

\begin{algorithm}[ht]
\caption{mPOLICE Training for Multi-Region Constraint Enforcement}
\label{alg:fine_tuning_mpolice}
\begin{algorithmic}[1]
\Require Network $f_{\boldsymbol{\theta}}$ (pre-trained base model), regions $\{R_i\}_{i=1}^N$ with vertices, associated constraints $\{\boldsymbol{E}_i, \boldsymbol{f}_i, \boldsymbol{C}_i, \boldsymbol{d}_i\}$, a fixed sign map $\mathcal{M}_{\text{signs}}$ assigning unique patterns to each $R_i$, training data $(\boldsymbol{X}_{\text{train}}, \boldsymbol{Y}_{\text{train}})$, task loss $\mathcal{L}_{\text{task}}$, optimizer, initial $\lambda_{\text{constraint}}$, tolerance $\epsilon$.
\State \texttt{// Initial enforcement of sign patterns}
\State Enforce sign patterns on $f_{\boldsymbol{\theta}}$ based on $\mathcal{M}_{\text{signs}}$ using method from Sec.~\ref{subsec:enforcing_signs}.
\State Initialize patience counter $p_c \gets 0$, best overall loss $L_{\text{best}} \gets \infty$.
\For{epoch = 1 to max\_epochs}
    \State \texttt{// Optimize task and constraint objectives}
    \For{each batch $(\boldsymbol{x}_{\text{batch}}, \boldsymbol{y}_{\text{batch}})$ from $(\boldsymbol{X}_{\text{train}}, \boldsymbol{Y}_{\text{train}})$}
        \State Compute task loss $\mathcal{L}_{\text{task}}(f_{\boldsymbol{\theta}}(\boldsymbol{x}_{\text{batch}}), \boldsymbol{y}_{\text{batch}})$.
        \State Initialize constraint penalty $\mathcal{L}_{\text{constraint}} \gets 0$.
        \For{each region $R_i$ with vertices $\{\boldsymbol{v}_p^{(i)}\}$}
            \State Evaluate $f_{\boldsymbol{\theta}}(\boldsymbol{v}_p^{(i)})$ at all vertices of $R_i$.
            \State Add penalties to $\mathcal{L}_{\text{constraint}}$ for violated equality/inequality constraints at vertices.
        \EndFor
        \State $\mathcal{L}_{\text{total}} \gets \mathcal{L}_{\text{task}} + \lambda_{\text{constraint}} \mathcal{L}_{\text{constraint}}$.
        \State Perform optimizer step on $\mathcal{L}_{\text{total}}$ to update $f_{\boldsymbol{\theta}}$.
    \EndFor
    \State \texttt{// Re-enforce sign patterns after parameter updates}
    \State Enforce sign patterns on $f_{\boldsymbol{\theta}}$ based on $\mathcal{M}_{\text{signs}}$ using method from Sec.~\ref{subsec:enforcing_signs}.
    \State Measure current aggregate constraint violation $V$ across all $R_i$.
    \State Compute a balanced loss $L_{\text{balanced}}$ (e.g., $\sqrt{\text{avg}(\mathcal{L}_{\text{task}}) \cdot (1+V)}$).
    \If{$L_{\text{balanced}} < L_{\text{best}}$}
        \State $L_{\text{best}} \gets L_{\text{balanced}}$, save current $f_{\boldsymbol{\theta}}$, $p_c \gets 0$.
    \Else
        \State Increment $p_c$.
    \EndIf
    \If{$p_c \ge \text{patience\_threshold}$}
        \If{$V > \epsilon$ and $\lambda_{\text{constraint}} < \lambda_{\text{max}}$}
            \State Increase $\lambda_{\text{constraint}}$ (e.g., $\lambda_{\text{constraint}} \gets \lambda_{\text{constraint}} \cdot \text{multiplier}$).
            \State $p_c \gets 0$.
        \Else
            \State \textbf{break} \texttt{// Converged by patience or constraint met/max penalty}
        \EndIf
    \EndIf
    \If{$V \le \epsilon$ and epoch $\ge$ min\_epochs}
        \State \textbf{break} \texttt{// Converged by meeting tolerance}
    \EndIf
\EndFor
\State Restore best saved $f_{\boldsymbol{\theta}}$.
\end{algorithmic}
\end{algorithm}

\section{Experiments}

In addition to the experiments presented in this section, further examples, including classification and regression tasks, are detailed in the supplementary material\footnote{The full code will be made available after publication.}. All experiments were run on an NVIDIA RTX 6000 ADA GPU with 144\,GB RAM and a 16-core AMD 5955WX CPU. Results are averaged over 5 random seeds, with error bars showing $\pm1$ standard deviation ($\sigma$).

\subsection{Reinforcement Learning}
\label{subsec:rl}

In this section, we demonstrate mPOLICE's application to a reinforcement learning (RL) task, enforcing safety-critical constraints on an agent's policy. The agent, trained with the Twin Delayed Deep Deterministic Policy Gradient (TD3) algorithm~\citep{fujimoto2018addressing}, navigates a 2D environment to reach a target while avoiding two static obstacles. This extends prior work~\citep{bouvier2024policed} that used the original POLICE algorithm for a single constrained buffer region, by now handling multiple such regions. mPOLICE constrains the agent's policy $\pi_{\boldsymbol{\theta}}(\boldsymbol{s})$ within buffer zones $R_i$ to ensure safe actions (e.g., an action component $a_y \le 0$). By assigning unique activation patterns in each $R_i$, mPOLICE renders the policy locally affine: $\pi_{\boldsymbol{\theta}}(\boldsymbol{s}) = \mathbf{\Lambda}_i \boldsymbol{s} + \boldsymbol{\gamma}_i$ for $\boldsymbol{s} \in R_i$. The safety condition, an affine constraint on the output (e.g., $[0,1]\pi_{\boldsymbol{\theta}}(\boldsymbol{s}) \le 0$), thus becomes an affine constraint on this local function, verifiable by checking only at the vertices of $R_i$. The supplementary material details the RL setup and specific constraints.

Figure~\ref{fig:rl_results} (right) shows the training dynamics. mPOLICE enforcement begins around 16,000 timesteps, causing brief spikes in critic and actor losses as it adjusts network parameters to enforce affine behavior and constraints. These spikes are typical in constrained optimization. Despite them, evaluation reward steadily rises due to safer, more successful policies. The left panel shows the learned policy and affine partitions, visualized using adapted techniques from~\citep{humayun2022splinecam, humayun2022exact}. Blue arrows indicate learned actions, and black lines mark the policy network’s affine polytope boundaries. The inset shows each red buffer region lies entirely within a single polytope, confirming that mPOLICE enforces safe, affine behavior within these regions.

\begin{figure}[ht]
    \centering
    \includegraphics[width=\textwidth]{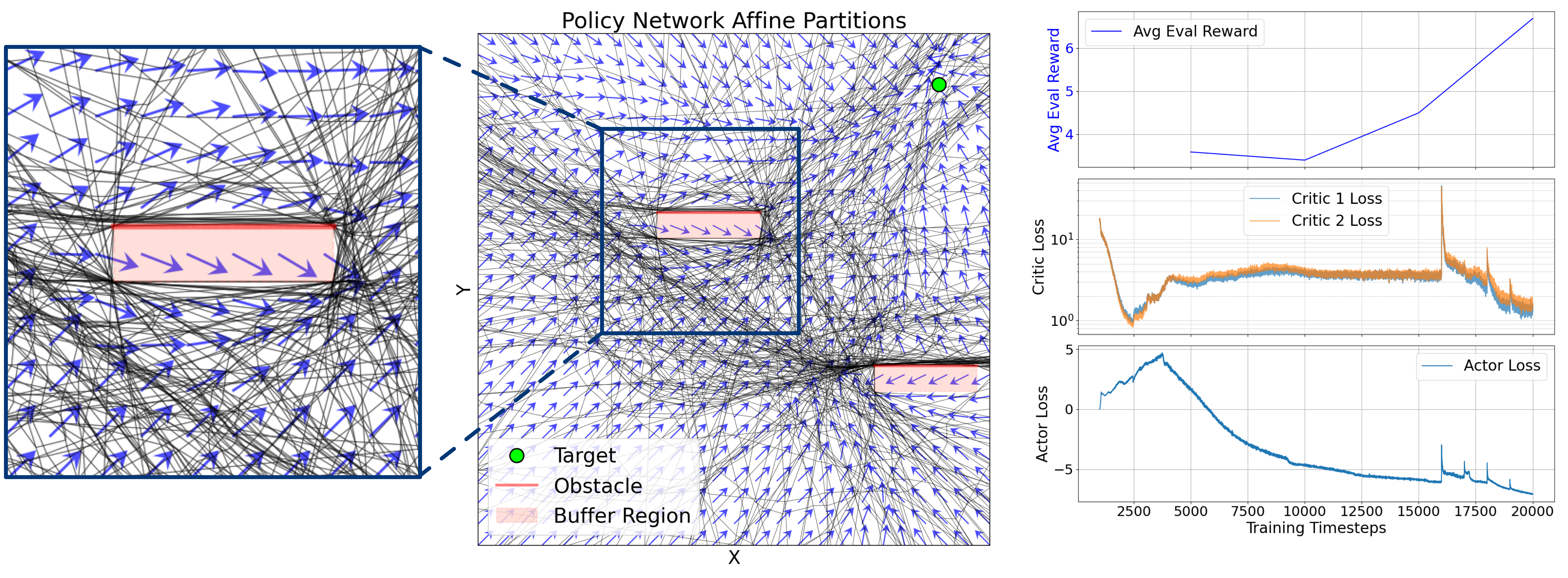}
    \caption{Reinforcement learning results. Left: Policy network affine partitions with learned policy (blue arrows), target (green circle), obstacles (red), buffer regions (light pink), and affine polytope boundaries from ReLU activations (thin black lines). Right: Training curves showing average evaluation reward, critic losses, and actor loss.}
    \label{fig:rl_results}
\end{figure}

\subsection{Implicit Occupancy-Field Learning with Geometric Constraints}
\label{implicit_shape}
In this experiment, a neural network learns an implicit 3D object representation featuring two cylindrical holes. The network maps 3D coordinates to a scalar indicating if a point is in the main body or empty. Accurate hole representation is vital for applications like 3D modeling. mPOLICE enforces that the network outputs the value of zero (as an equality constraint) for points within the hole regions, ensuring they are empty.

\begin{table}[ht]
    \centering
    \caption{3D shape representation (mean $\pm$ standard deviation over five runs). “Violation’’ is the mean absolute deviation from the target zero value in hole regions. “FineTune\%’’ is mPOLICE fine-tuning time relative to initial training. Threshold ($1\times 10^{-3}$) met for all experiments.}
    \label{tab:shape_results}
    \resizebox{\textwidth}{!}{%
    \begin{tabular}{lccccc}
        \toprule
        Method & MSE Overall & Violation & Initial Train (s) & FineTune (s) & FineTune \% \\
        \midrule
        Baseline & 0.001909 $\pm$ 0.000109 & 40.942 $\pm$ 1.791 & 209.54 $\pm$ 15.78 & 0.00 $\pm$ 0.00 & N/A \\
        mPOLICE & 0.006498 $\pm$ 0.000609 & 0.000350 $\pm$ 0.000048 & 209.54 $\pm$ 15.78 & 112.94 $\pm$ 25.92 & 53.9 $\pm$ 13.0\% \\
        \bottomrule
    \end{tabular}%
    }
\end{table}

The performance of mPOLICE in this task is compared against a baseline model that is trained without explicit constraint enforcement. As shown in Table~\ref{tab:shape_results}, the baseline model's high constraint violation (40.942) indicates failed hole representation. In contrast, mPOLICE reduces this violation to 0.000350, well below the $1.0 \times 10^{-3}$ threshold, successfully meeting the constraint. This improved adherence slightly increases overall MSE, an expected trade-off. The mPOLICE fine-tuning process was approximately 53.9\% of the initial training time.

Figure~\ref{fig:shape_3d_comparison} highlights the qualitative differences. The ground truth and voxelized baseline/mPOLICE predictions are shown. Voxelization discretizes the network's continuous output. The baseline network mischaracterizes some empty spaces as being part of the solid object, whereas mPOLICE accurately represents them, demonstrating its geometric constraint enforcement.

\begin{figure}[ht]
    \centering
    \includegraphics[width=\textwidth]{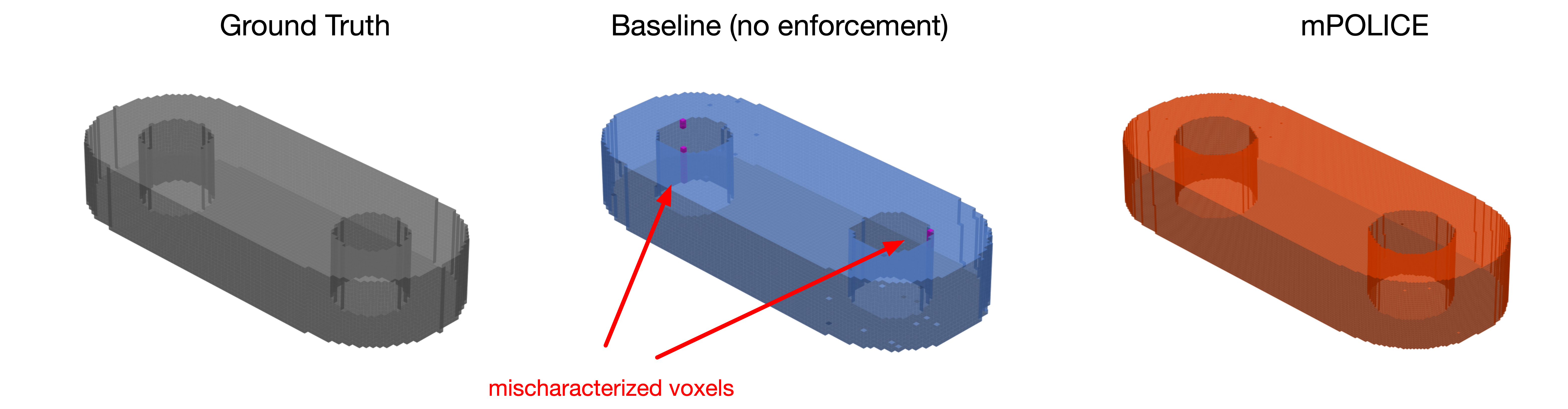}
    \caption{3D shape representations: Ground Truth, Baseline Prediction, and mPOLICE Prediction. The baseline's voxelized output incorrectly fills holes (magenta), while mPOLICE accurately depicts empty cylindrical holes, satisfying constraints.}
    \label{fig:shape_3d_comparison}
\end{figure}

\subsection{Fluid Dynamics}
\label{subsec:fluid_dynamics}

This experiment uses mPOLICE to enforce zero-velocity boundary conditions in a 2D fluid dynamics simulation. The objective is to learn the velocity magnitude field of a fluid flow generated by the XLB~\citep{ataei2024xlb} library, ensuring zero velocity within two disjoint square regions that represent obstacles. A neural network is trained for this task, and we compare a standard (baseline) model with one enhanced by mPOLICE constraints.

The results of this experiment are summarized in Table~\ref{tab:fluid_dynamics_results}, which compares the baseline and constrained models across several metrics. The constrained model achieves near-zero constraint violations and demonstrates slightly higher MSE compared to the baseline. The mPOLICE fine-tuning phase accounts for approximately 19.4\% of the total training time for the constrained model. Experimental details, hyperparameters, and result visualizations are provided in the supplementary material.

\begin{table}[ht]
\centering
\caption{Fluid dynamics example: “Violation” is constraint violation. “FineTune\%” is mPOLICE fine-tuning as \% of total constrained training. Threshold ($1\times 10^{-4}$) met for all experiments.}
\label{tab:fluid_dynamics_results}
\resizebox{\textwidth}{!}{
\begin{tabular}{lccccc}
\toprule
Method & MSE Overall & Violation & Initial Train (s) & FineTune (s) & FineTune \% \\
\midrule
Baseline & $1.40\times10^{-6} \pm 5.5\times10^{-7}$ & $8.46\times10^{-4} \pm 5.13\times10^{-4}$ & $141.69 \pm 2.37$ & $0.00 \pm 0.00$ & N/A \\
mPOLICE  & $2.60\times10^{-6} \pm 8.9\times10^{-7}$ & $2\times10^{-6} \pm 1\times10^{-6}$ & $141.69 \pm 2.37$ & $34.19 \pm 6.70$ & $24.1 \pm 4.7\%$ \\
\bottomrule
\end{tabular}
}
\end{table}

\subsection{Scalability}
\label{subsec:scalability}

The computational cost of mPOLICE arises mainly from the enforcement step ( $T_{\text{Enforce}}$), which solves quadratic programs in each network layer. We have provided detailed scaling experiments in the technical appendix that show $T_{\text{Enforce}}$ increases with network depth, width, and the total number of vertices in constrained regions. Sign assignment time ($T_{\text{Assign}}$ in the technical appendix) is typically negligible. While high-dimensional shapes can have many vertices, practical applications (e.g., robotics, physics) often involve constraints on lower-dimensional subspaces with manageable vertex counts. Thus, solving mPOLICE can still be manageable in many practical scenarios.

Our current enforcement implementation is not highly optimized (e.g., QPs are solved sequentially on CPU, which also adds communication overhead). Note that enforcement time is independent of training dataset size, adding a fixed cost per invocation. Therefore, its percentage of total training time varies significantly with base training duration. In our experiments presented before, mPOLICE's fine-tuning phase (including repeated enforcement) was roughly 20-70\% of the baseline training. Note that mPOLICE has zero inference overhead.

\subsection{Non-Convex Approximation}
\label{sec:non_convex_approximation}
While mPOLICE is primarily designed for convex regions, it can be extended to approximate non-convex ones by decomposing them into a collection of closely-aligned convex sub-regions. Further details and an empirical demonstration are provided in the technical appendix.

\section{Limitations}
\label{sec:limitations}

While mPOLICE offers a provable method for multi-region affine constraint enforcement, it faces limitations. Scalability is a key concern, as the sign pattern enforcement step, involving quadratic programs, can be computationally intensive with larger networks or numerous vertices. Moreover, our current implementation is not optimized for high-performance. The method is limited to MLPs with piecewise linear activations, which, although restrictive, still covers a wide range of applications. Extending it to non-linear activations or other architectures like CNNs and Transformers remains future work. The heuristics for assigning unique sign patterns are practical but not foolproof, and successful separation depends on network capacity. Furthermore, the fine-tuning process introduces hyperparameters that may require tuning. Addressing these aspects is crucial for broader applicability.

\section{Conclusion}
\label{sec:conclusion}

We introduced mPOLICE, an extension of the POLICE algorithm, for enforcing affine constraints in deep neural networks across multiple disjoint convex regions. By assigning unique activation patterns to each region, mPOLICE provably ensures localized affine behavior, overcoming the prior "convex hull problem" and guaranteeing each region lies within a distinct affine polytope. Our practical algorithm integrates this into standard training, demonstrating efficacy in diverse applications like safety-critical reinforcement learning and geometrically constrained 3D modeling, while having zero inference cost. Future work will explore broader applications in complex simulations and Neural Implicit Representations.

\bibliography{main}
\bibliographystyle{plainnat}

\newpage
\appendix
\section{Proof of Localized Affine Behavior with Unique Activation Patterns}
\begin{theorem}[Localized Affine Behavior with Unique Activation Patterns]
    \label{thm:concise_proof}
    Let $f_{\boldsymbol{\theta}}$ be a feedforward ReLU network of depth $L$.
    Consider a convex region $R = \mathrm{conv}\{\boldsymbol{v}_1,\dots,\boldsymbol{v}_P\}$ in the input space, and a set of $N$ disjoint convex regions $\{R_i\}_{i=1}^N$, where each $R_i = \mathrm{conv}\{\boldsymbol{v}_{i,1},\dots,\boldsymbol{v}_{i,P_i}\}$ in the input space.
    \begin{enumerate}
        \item \textbf{Single Region Affinity:} If $R$ has a consistent pre-activation sign pattern across all its vertices for all neurons in all layers (i.e., for each neuron, its pre-activation is either non-negative for all vertices of $R$, or non-positive for all vertices of $R$), then $R$ is contained entirely within a single affine polytope of $f_{\boldsymbol{\theta}}$. Consequently, $f_{\boldsymbol{\theta}}$ is affine on $R$.
        \item \textbf{Multi-Region Localized Affinity:} If each region $R_i$ is assigned a \emph{unique global activation sign pattern} $\boldsymbol{S}_i$ (where $\boldsymbol{S}_i$ is the collection of pre-activation signs for all neurons across all layers, and $\boldsymbol{S}_i \neq \boldsymbol{S}_j$ for $i \neq j$), and this pattern $\boldsymbol{S}_i$ is consistently maintained for all vertices within $R_i$, then each region $R_i$ is contained within its own distinct affine polytope $\mathcal{P}_i$. As a result, $f_{\boldsymbol{\theta}}$ is affine on each $R_i$ individually. This affine behavior is strictly localized to $R_i$, preventing unintended affine behavior over the convex hull of combinations of these regions due to shared activation patterns.
    \end{enumerate}
\end{theorem}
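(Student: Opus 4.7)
The plan is to prove the two parts separately, with part 1 doing most of the conceptual work and part 2 following as a straightforward consequence together with a short argument about distinctness and localization. Throughout, I will lean on the fact that each layer's pre-activation map is affine in its input and that a half-space is convex, so a sign constraint on the vertices of a convex set automatically extends to every point of the set.

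For part 1, I would proceed by induction on the layer index $\ell$, with the inductive invariant being that (i) the map $\boldsymbol{x}\mapsto \boldsymbol{x}^{(\ell)}$ is affine on $R$, and (ii) the pre-activation $\boldsymbol{z}^{(\ell)}(\boldsymbol{x})$ has, for each neuron $n$, a sign on all of $R$ equal to the sign recorded on the vertices. The base case $\ell=1$ is immediate since $\boldsymbol{z}^{(1)}(\boldsymbol{x})=\boldsymbol{W}^{(1)}\boldsymbol{x}+\boldsymbol{b}^{(1)}$ is affine; the half-space $\{\boldsymbol{x}:\,\mathrm{sign}_n\,z_n^{(1)}(\boldsymbol{x})\ge 0\}$ is convex and contains every vertex $\boldsymbol{v}_p$, so by convexity it contains $R=\mathrm{conv}\{\boldsymbol{v}_1,\dots,\boldsymbol{v}_P\}$. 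Once the sign is fixed on $R$, $\sigma$ acts on the corresponding coordinate as either the identity or the zero map, so $\boldsymbol{x}^{(2)}=\sigma(\boldsymbol{z}^{(1)})$ is affine on $R$. For the inductive step, the invariant lets me write $\boldsymbol{z}^{(\ell+1)}(\boldsymbol{x})=\boldsymbol{W}^{(\ell+1)}\boldsymbol{x}^{(\ell+1)}+\boldsymbol{b}^{(\ell+1)}$ as an affine function of $\boldsymbol{x}$ on $R$, and the same convexity-of-half-space argument propagates the vertex sign pattern to all of $R$. After $L-1$ iterations every ReLU is locked, so $f_{\boldsymbol{\theta}}$ is a composition of maps that are affine on $R$, and hence affine on $R$; equivalently, $R$ lies inside a single cell $\mathcal{P}$ of the network's piecewise-affine decomposition, namely the intersection of the prescribed sign half-spaces.

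For part 2, the single-region result applied to each $R_i$ immediately yields an affine polytope $\mathcal{P}_i$ (the intersection over all neurons of the half-spaces dictated by $\boldsymbol{S}_i$) such that $R_i\subseteq\mathcal{P}_i$ and $f_{\boldsymbol{\theta}}\big|_{\mathcal{P}_i}$ is affine. Distinctness of the $\mathcal{P}_i$ then follows from the assumed distinctness of the global sign vectors $\boldsymbol{S}_i$: if $\boldsymbol{S}_i\neq \boldsymbol{S}_j$ there is some neuron $(n,\ell)$ whose sign constraints are opposite on $\mathcal{P}_i$ and $\mathcal{P}_j$, so $\mathcal{P}_i$ and $\mathcal{P}_j$ have disjoint interiors and correspond to different affine pieces $\boldsymbol{A}_i\boldsymbol{x}+\boldsymbol{c}_i\neq\boldsymbol{A}_j\boldsymbol{x}+\boldsymbol{c}_j$ of $f_{\boldsymbol{\theta}}$. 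To formalize the ``no unintended affine behavior on the convex hull'' statement, I would take any segment from a point of $R_i$ to a point of $R_j$: the pre-activation of the differing neuron $(n,\ell)$ is a continuous function of the segment parameter that must cross zero, so the segment exits $\mathcal{P}_i$ and enters a cell where the ReLU state differs, making $f_{\boldsymbol{\theta}}$ governed by a different affine piece; hence no single affine map can describe $f_{\boldsymbol{\theta}}$ on $\mathrm{conv}(R_i\cup R_j)$, which is the desired localization.

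The main obstacle is being careful in the inductive step of part 1: I need the pre-activation sign hypothesis on \emph{vertices} alone to propagate to the whole region, and for that I must maintain both the affinity of $\boldsymbol{x}^{(\ell)}$ in $\boldsymbol{x}$ on $R$ and the fact that this affinity persists after applying $\sigma$ with a fixed sign pattern. The subtlety is handling neurons whose pre-activation is exactly zero on some vertices (the ``boundary'' case): I would treat a non-negative sign as placing the vertex on the closed half-space $\{z\ge 0\}$, on which $\sigma$ still coincides with the identity, so the convexity argument and the affinity of the layer output are preserved. Once this bookkeeping is in place, the rest of part 1 is a clean induction, and part 2 reduces to invoking distinctness and continuity, neither of which presents a real difficulty.
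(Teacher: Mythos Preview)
Your proposal is correct and follows essentially the same approach as the paper: a layer-wise induction for Part~1 using the convexity-of-half-spaces argument to propagate vertex sign consistency to all of $R$ and to keep each layer's output affine on $R$, followed in Part~2 by applying Part~1 to each $R_i$ and using $\boldsymbol{S}_i\neq\boldsymbol{S}_j$ at some neuron to conclude the polytopes are distinct. Your added segment/IVT argument for the convex-hull localization and your explicit handling of the zero-boundary case are slightly more detailed than the paper's treatment, but the structure and key ideas coincide.
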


\begin{proof}
\textbf{Part 1: Single Region Affinity.}
We prove by induction on the layer index $\ell$ that if pre-activation signs are consistent for the vertices of $R$ up to layer $\ell$, then the network's mapping up to the output of layer $\ell$'s activations is affine on $R$.

Base Case ($\ell=0$, input layer): The input $\boldsymbol{x}$ is trivially affine on $R$. The pre-activations for the first hidden layer are $z_k^{(1)}(\boldsymbol{x}) = \boldsymbol{w}_k^{(1)\top}\boldsymbol{x} + b_k^{(1)}$. If $z_k^{(1)}(\boldsymbol{v}_p)$ has a consistent sign for all vertices $\boldsymbol{v}_p$ of $R$, then by linearity of $z_k^{(1)}$ and convexity of $R$, $z_k^{(1)}(\boldsymbol{x})$ maintains this sign for all $\boldsymbol{x} \in R$. (Any $\boldsymbol{x} \in R$ is a convex combination $\sum \alpha_p \boldsymbol{v}_p$, so $z_k^{(1)}(\boldsymbol{x}) = \sum \alpha_p z_k^{(1)}(\boldsymbol{v}_p)$, which will have the same sign as $z_k^{(1)}(\boldsymbol{v}_p)$ if they are all consistent). Thus, the ReLU activation $\sigma(z_k^{(1)}(\boldsymbol{x}))$ becomes $z_k^{(1)}(\boldsymbol{x})$ for all $\boldsymbol{x} \in R$ (if $z_k^{(1)}(\boldsymbol{v}_p) \ge 0$ for all vertices $\boldsymbol{v}_p$ of $R$), or it becomes $0$ for all $\boldsymbol{x} \in R$ (if $z_k^{(1)}(\boldsymbol{v}_p) \le 0$ for all $p$).  In both cases, the output of the first layer's activations, $\boldsymbol{x}^{(2)}(\boldsymbol{x})$, is an affine function of $\boldsymbol{x}$ when restricted to $R$.

Inductive Hypothesis: Assume that for layers $1, \dots, \ell-1$, consistent pre-activation signs for the vertices of $R$ imply that the output of layer $\ell-1$'s activations, $\boldsymbol{x}^{(\ell)}(\boldsymbol{x})$, is an affine function of the original input $\boldsymbol{x}$ when restricted to $R$.

Inductive Step (Layer $\ell$): The pre-activations for layer $\ell$ are $z_k^{(\ell)}(\boldsymbol{x}) = \boldsymbol{w}_k^{(\ell)\top}\boldsymbol{x}^{(\ell)}(\boldsymbol{x}) + b_k^{(\ell)}$. Since $\boldsymbol{x}^{(\ell)}(\boldsymbol{x})$ is affine on $R$ (by IH), $z_k^{(\ell)}(\boldsymbol{x})$ is also affine on $R$. By the same argument as the base case, if $z_k^{(\ell)}$ has a consistent sign across all vertices of $R$, it maintains that sign throughout $R$. Therefore, the output of layer $\ell$'s activations, $\boldsymbol{x}^{(\ell+1)}(\boldsymbol{x}) = \sigma(\boldsymbol{z}^{(\ell)}(\boldsymbol{x}))$, is an affine function of $\boldsymbol{x}$ when restricted to $R$.

This holds up to the final layer $L$. If all pre-activation signs are consistent within $R$ for all neurons in all layers, then $R$ lies within a single region where all ReLU activation choices are fixed. This region is an affine polytope, and the network function $f_{\boldsymbol{\theta}}(\boldsymbol{x})$ becomes an affine transformation $\boldsymbol{A}_R \boldsymbol{x} + \boldsymbol{c}_R$ for $\boldsymbol{x} \in R$.

\textbf{Part 2: Multiple Disjoint Regions with Unique Patterns.}
From Part 1, if region $R_i$ maintains its assigned global activation sign pattern $\boldsymbol{S}_i$ consistently across its vertices, it is contained within a single affine polytope, let's call it $\mathcal{P}_i$. This polytope $\mathcal{P}_i$ is defined by the specific set of active/inactive states of all ReLU neurons as dictated by $\boldsymbol{S}_i$.

Now, consider two distinct regions $R_i$ and $R_j$ ($i \neq j$), which are assigned unique global activation patterns $\boldsymbol{S}_i$ and $\boldsymbol{S}_j$ respectively, with $\boldsymbol{S}_i \neq \boldsymbol{S}_j$. The fact that $\boldsymbol{S}_i \neq \boldsymbol{S}_j$ means there exists at least one neuron $k$ in some layer $m$ for which the pre-activation sign for $R_i$ (say, $s_{mk}^{(i)}$) is different from the pre-activation sign for $R_j$ (say, $s_{mk}^{(j)}$).
For $R_i$ to be in $\mathcal{P}_i$, all its points must satisfy the half-space conditions corresponding to $\boldsymbol{S}_i$. Specifically, for neuron $k$ in layer $m$, points in $\mathcal{P}_i$ satisfy $s_{mk}^{(i)} (\boldsymbol{w}_{mk}^{(m)\top}\boldsymbol{x}^{(m)} + b_{mk}^{(m)}) \ge 0$.
For $R_j$ to be in its polytope $\mathcal{P}_j$, all its points must satisfy the half-space conditions corresponding to $\boldsymbol{S}_j$. Specifically, for the same neuron $k$ in layer $m$, points in $\mathcal{P}_j$ satisfy $s_{mk}^{(j)} (\boldsymbol{w}_{mk}^{(m)\top}\boldsymbol{x}^{(m)} + b_{mk}^{(m)}) \ge 0$.

Since $s_{mk}^{(i)} \neq s_{mk}^{(j)}$, it means $s_{mk}^{(i)} = -s_{mk}^{(j)}$. Therefore, the condition defining one side of the hyperplane for neuron $mk$ that $R_i$ must satisfy is precisely the opposite of the condition that $R_j$ must satisfy for that same neuron. This implies that $R_i$ and $R_j$ lie on opposite sides of the hyperplane defined by neuron $mk$ (or one is on the hyperplane if the pre-activation is zero, and the other is strictly on one side). Consequently, $R_i$ and $R_j$ cannot be in the same affine polytope. Each region $R_i$, by virtue of its unique and consistently applied global activation pattern $\boldsymbol{S}_i$, is confined to its own distinct affine polytope $\mathcal{P}_i$.

This ensures that the affine behavior $f_{\boldsymbol{\theta}}(\boldsymbol{x}) = \boldsymbol{A}_i \boldsymbol{x} + \boldsymbol{c}_i$ is specific to $R_i$ and $f_{\boldsymbol{\theta}}(\boldsymbol{x}) = \boldsymbol{A}_j \boldsymbol{x} + \boldsymbol{c}_j$ is specific to $R_j$, and since $\mathcal{P}_i \neq \mathcal{P}_j$, there is no unintended affine relationship enforced across the convex hull of $R_i \cup R_j$ that would arise if they shared the same polytope. The affine properties are localized.
\end{proof} 

\section{Example: Contradiction in the Bias-Only Approach}
\label{app:contradiction_bias_only}

A neuron's bias term shifts its activation function, effectively moving the decision boundary (the hyperplane where its pre-activation is zero). While adjusting only the bias can successfully enforce a consistent sign pattern for a neuron's pre-activation across a single convex input region (as in the original POLICE algorithm), this approach can fail when attempting to enforce \emph{different} or conflicting sign patterns across multiple disjoint regions. This is because a single bias value per neuron may not be flexible enough to satisfy all regional constraints simultaneously if these constraints impose contradictory requirements on the neuron's activation threshold. We illustrate this fundamental limitation with a clear example.

\paragraph{Explanation of the Setup}
Consider a single-layer network with one neuron, whose pre-activation for a scalar input $x$ is $z(x) = w \cdot x + b$. Assume the weight $w$ is fixed and positive ($w > 0$). We define two disjoint convex regions: $R_1 = [0, 1]$ and $R_2 = [2, 3]$.
Our goal is to find a single bias $b$ such that:
\begin{enumerate}
    \item For $R_1$: $z(x) \ge 0$ (neuron pre-activation is non-negative).
    \item For $R_2$: $z(x) \le 0$ (neuron pre-activation is non-positive).
\end{enumerate}
This requires $\min_{x \in R_1} (w \cdot x + b) \ge 0$ and $\max_{x \in R_2} (w \cdot x + b) \le 0$.

\paragraph{Proof of Contradiction}
Since $w > 0$, $z(x)$ is monotonically increasing.
\begin{enumerate}
    \item \textbf{For $R_1 = [0, 1]$}, the minimum of $z(x)$ occurs at $x=0$. So, $w \cdot 0 + b \ge 0 \implies b \ge 0$.
    \item \textbf{For $R_2 = [2, 3]$}, the maximum of $z(x)$ occurs at $x=3$. So, $w \cdot 3 + b \le 0 \implies b \le -3w$.
\end{enumerate}
We have $b \ge 0$ and $b \le -3w$. Since $w > 0$, $-3w$ is a strictly negative number. It is impossible for $b$ to be simultaneously non-negative and less than or equal to a strictly negative value. This is a contradiction.

\paragraph{Conclusion}
This example demonstrates that a bias-only adjustment strategy cannot, in general, reconcile conflicting sign requirements across multiple disjoint regions. There is no single bias value $b$ that satisfies the desired conditions for both $R_1$ and $R_2$. This highlights the necessity of adjusting both weights and biases, as mPOLICE does, to achieve the required flexibility for localized control over neuron activations in multiple distinct input domains.

\section{Non-Convex Approximation}
\label{app:nonconvex}

While mPOLICE is fundamentally designed to enforce affine constraints within \emph{convex} polytopes, its principles can be extended to handle \emph{non-convex} regions through approximation. The core strategy involves decomposing the target non-convex region $\mathcal{R}_{\text{nc}}$ into a collection of disjoint (or nearly-disjoint) convex sub-regions $\{\mathcal{R}_{c,1}, \mathcal{R}_{c,2}, \dots, \mathcal{R}_{c,M}\}$ such that their union $\bigcup_{j=1}^M \mathcal{R}_{c,j}$ effectively covers or approximates $\mathcal{R}_{\text{nc}}$. mPOLICE can then be applied to each convex sub-region $\mathcal{R}_{c,j}$ individually, assigning it a unique activation pattern to ensure localized affine behavior $f_{\boldsymbol{\theta}}(\boldsymbol{x}) = \mathbf{\Lambda}_j \boldsymbol{x} + \boldsymbol{\gamma}_j$ for $\boldsymbol{x} \in \mathcal{R}_{c,j}$. If the goal is to enforce a single, consistent affine behavior across the entire non-convex region $\mathcal{R}_{\text{nc}}$, then the constraints imposed on each $\mathcal{R}_{c,j}$ would aim to make $(\mathbf{\Lambda}_j, \boldsymbol{\gamma}_j)$ as similar as possible.

This technique is particularly useful when an overall non-convex region needs to adhere to specific, often simple, behaviors that can be achieved by patching together affine pieces. For instance, in discrete reinforcement learning or path planning, a narrow "corridor" or an irregularly shaped "safe zone" might be approximated by a series of closely-packed convex regions. The "gaps" between these regions, if small enough, might be traversable in a single time step or treated as negligible transition zones where the affine behavior approximately holds.

The effectiveness of this approximation depends on the granularity of the convex decomposition and the proximity of the sub-regions. While it doesn't provide the same provable guarantee over the entire non-convex region as it does for its individual convex components, it offers a practical way to extend mPOLICE's capabilities to more complex geometries. The trade-off involves an increased number of constrained regions (and thus vertices for mPOLICE to handle) if a fine-grained approximation is needed.

In the example shown in Figure \ref{fig:non_convex}, we empirically show that, given a non-convex shape can be decomposed into multiple convex regions, our method allows approximating a non-convex affine region by placing disjoint convex regions in close proximity. The network is trained to learn a saddle background field subject to two affine regions. The left plot shows the two squares are separated by a large gap, so the space between them comprises multiple boundaries (black lines) with no guarantee of affine behavior. In contrast, in the right plot, the squares are placed extremely close to each other (within ten times machine precision), so the region between them effectively becomes part of the boundary separating the two distinct polytopes.

\begin{figure*}[t]
\centering
\includegraphics[width=1.0\linewidth]{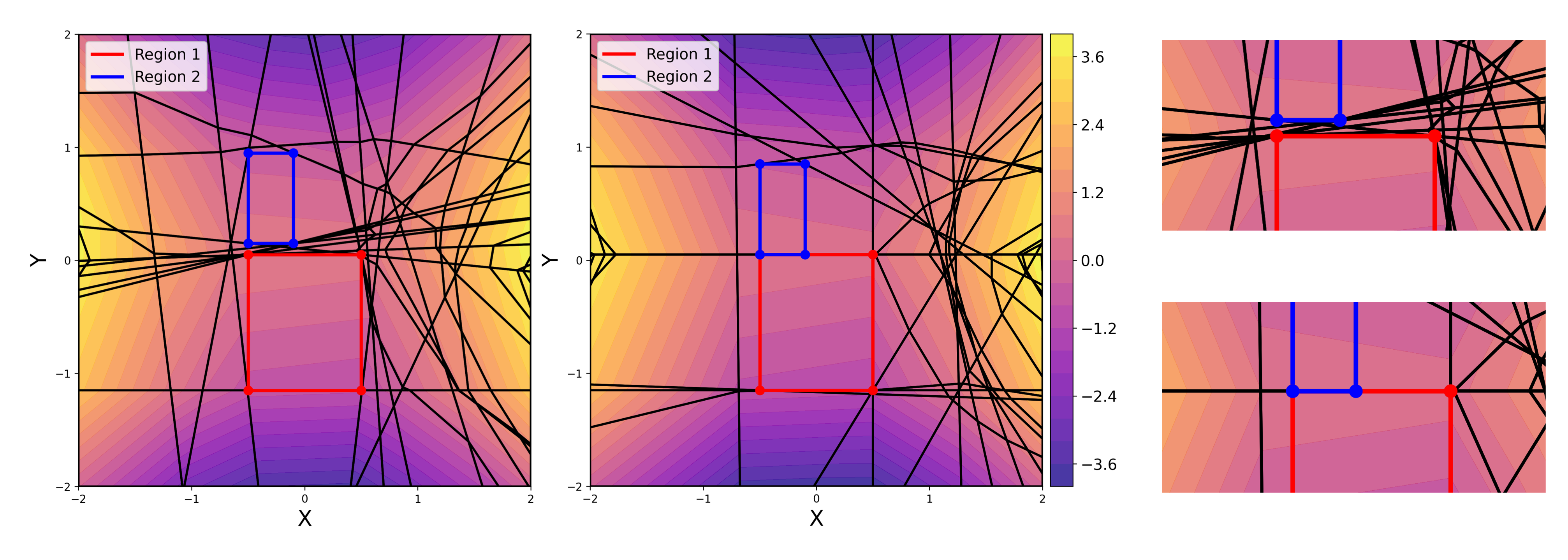}
\caption{Two affine regions approximating a saddle background field. On the left, the large gap between squares spans several polytopes, yielding no affine guarantees between them. On the right, placing the squares very close turns the gap into a shared boundary of two polytopes, effectively approximating a non-convex shape with two convex pieces.}
\label{fig:non_convex}
\end{figure*}

\section{Reinforcement Learning Problem Details}
\label{app:rl_problem_details}

The reinforcement learning environment is a continuous 2D navigation task within a normalized square domain, typically $[-1, 1] \times [-1, 1]$. The agent's objective is to navigate from a starting position to a target location while avoiding two horizontal line obstacles. The agent is trained using the Twin Delayed Deep Deterministic Policy Gradient (TD3) algorithm~\citep{fujimoto2018addressing}.

\paragraph{State Space.} The state $\boldsymbol{s} \in \mathbb{R}^2$ represents the agent's current $(x, y)$ coordinates within the normalized environment.

\paragraph{Action Space.} The action $\boldsymbol{a} \in \mathbb{R}^2$ represents the desired change in position (velocity components $v_x, v_y$). Actions are continuous and typically clipped to a maximum magnitude, e.g., $v_x, v_y \in [-a_{\text{max}}, a_{\text{max}}]$. The agent's position is updated via $\boldsymbol{s}_{t+1} = \boldsymbol{s}_t + \boldsymbol{a}_t \cdot \Delta t$, where $\Delta t$ is a small time step.

\paragraph{Environment Features.}
\begin{itemize}
    \item \textbf{Target}: A fixed point $\boldsymbol{s}_T = (x_T, y_T)$ (e.g., $(0.8, 0.8)$ in our experiments, within the normalized domain).
    \item \textbf{Obstacles}: Two horizontal line segments. For $k \in \{1, 2\}$, obstacle $k$ is defined by its y-coordinate $y_{O_k}$ and an x-interval $[x_{O_k}^{\text{min}}, x_{O_k}^{\text{max}}]$. In our setup, scaled to a normalized $[-1,1]$ domain, example parameters are:
    \begin{itemize}
        \item Obstacle 1: $y_{O_1} = 0.3$, $x \in [-0.3, 0.1]$.
        \item Obstacle 2: $y_{O_2} = -0.3$, $x \in [0.55, 0.95]$.
    \end{itemize}
\end{itemize}

\paragraph{Buffer Regions and Constraints.} For each obstacle $k$, a rectangular buffer region $R_k$ is defined directly below it. If obstacle $k$ spans $x \in [x_{O_k}^{\text{min}}, x_{O_k}^{\text{max}}]$ at height $y_{O_k}$, its buffer region $R_k$ is defined by vertices:
\[
(x_{O_k}^{\text{min}}, y_{O_k} - h_B), (x_{O_k}^{\text{max}}, y_{O_k} - h_B), (x_{O_k}^{\text{max}}, y_{O_k}), (x_{O_k}^{\text{min}}, y_{O_k})
\]
where $h_B$ is the buffer height (e.g., $0.1$ in the normalized environment scale). Within each buffer region $R_k$, we enforce the constraint that the vertical component of the agent's action $a_y$ must be non-positive, i.e., $a_y \le 0$. This forces the agent to move downwards or horizontally, away from the obstacle immediately above it, when inside this buffer zone. This is enforced by ensuring the policy network $f_{\boldsymbol{\theta}}(\boldsymbol{s}) = \boldsymbol{a}$ satisfies $a_y \le 0$ for all $\boldsymbol{s} \in R_k$.

\paragraph{Reward Function.} The reward $r_t$ at each timestep is designed to encourage reaching the target while penalizing undesirable outcomes. It is defined as:
\begin{align*}
    r_t = & -\|\boldsymbol{s}_{t+1} - \boldsymbol{s}_T\|_2 \\
          & + R_{\text{target}} \cdot \mathbb{I}(\|\boldsymbol{s}_{t+1} - \boldsymbol{s}_T\|_2 < \epsilon_T \text{ and } \boldsymbol{s}_{t+1} \notin \bigcup_k R_k) \\
          & + P_{\text{collision}} \cdot \mathbb{I}(\text{agent collided with an obstacle}) \\
          & + P_{\text{bounds}} \cdot \mathbb{I}(\text{agent is out of environment bounds})
\end{align*}
where $\mathbb{I}(\cdot)$ is the indicator function. The components are:
\begin{itemize}
    \item $-\|\boldsymbol{s}_{t+1} - \boldsymbol{s}_T\|_2$ is a shaping reward based on the negative Euclidean distance to the target.
    \item $R_{\text{target}}$ is a positive reward (e.g., $+10$) for reaching the target, awarded if the agent is within a small distance $\epsilon_T$ of $\boldsymbol{s}_T$ and is not inside any buffer region $R_k$.
    \item $P_{\text{collision}}$ is a negative penalty (e.g., $-20$) if the agent's path segment from $\boldsymbol{s}_t$ to $\boldsymbol{s}_{t+1}$ intersects an obstacle.
    \item $P_{\text{bounds}}$ is a negative penalty (e.g., $-10$) if the agent moves outside the predefined environment boundaries.
\end{itemize}
The episode terminates upon reaching the target, a collision, or exceeding a maximum episode length.

The hyperparameters used in the reinforcement learning experiment (Section~\ref{subsec:rl}) are detailed in Table~\ref{tab:rl_hyperparameters}.

\begin{table}[htbp]
\footnotesize
\setlength{\tabcolsep}{6pt}
\renewcommand{\arraystretch}{0.9}
\centering
\caption{Hyperparameters for the Reinforcement Learning Experiment.}
\label{tab:rl_hyperparameters}
\begin{tabular}{@{}lll@{}}
\toprule
\textbf{Category} & \textbf{Parameter} & \textbf{Value} \\
\midrule
\multirow{10}{*}{Environment} 
& State dimensionality & 2 \\
& Action dimensionality & 2 \\
& Max action magnitude & 1.0 \\
& Time step ($\Delta t$) & 0.2 \\
& Max episode length & 500 \\
& Buffer height (norm.) & 0.1 \\
& Out-of-bounds penalty & -10.0 \\
& Collision penalty & -20.0 \\
& Target reached bonus & 10.0 \\
& Target threshold & 0.1 \\
\midrule
\multirow{5}{*}{Initial Sampling (\%)} 
& Near target & 10 \\
& Below buffer & 20 \\
& Above buffer & 20 \\
& Random & 50 \\
& Target start radius & 0.5 \\
\midrule
\multirow{13}{*}{TD3 Algorithm} 
& Replay buffer size & 2,000,000 \\
& Batch size & 20,000 \\
& Discount ($\gamma$) & 0.99 \\
& Target update rate ($\tau$) & 0.005 \\
& Policy noise & 0.2 \\
& Noise clip & 0.5 \\
& Policy update freq. & 2 \\
& Actor learning rate & $3 \times 10^{-4}$ \\
& Critic learning rate & $3 \times 10^{-4}$ \\
& Exploration noise & 0.1 \\
& Random exploration steps & 1,000 \\
& Training timesteps & 20,000 \\
\midrule
\multirow{5}{*}{Architecture} 
& Actor width & 64 \\
& Actor num. hidden layers & 3 \\
& Critic width & 64 \\
& Critic num. hidden layers & 3 \\
& Activation function & Leaky ReLU (0.01) \\
\midrule
\multirow{3}{*}{mPOLICE} 
& Start timestep & 16,000 \\
& Interval (timesteps) & 1,000 \\
& Margin ($\delta$) & 0.0 \\
\midrule
\multirow{4}{*}{Final Loop} 
& Max iterations & 50 \\
& Steps per iteration & 1,000 \\
& Patience & 50 \\
& Penalty weight & 1.0 \\
\bottomrule
\end{tabular}
\end{table}

\section{Fluid Dynamics Experiment Details}
\label{app:fluid_dynamics_details}

This supplementary material provides further details on the fluid dynamics experiment discussed in Section~\ref{subsec:fluid_dynamics}.

\paragraph{Problem Setup.}
The task is to predict the scalar velocity magnitude at interior points of a 2D grid representing a fluid flow. The overall grid dimensions are $400 \times 100$ pixels, with the prediction focused on the interior $398 \times 98$ grid. Input to the network consists of normalized $(x,y)$ coordinates of these interior grid points. The ground truth velocity magnitude is derived from a pre-computed simulation data file.

Two square regions, each $16 \times 16$ pixels, are defined within the flow field. The centers of these squares are located at approximately $(80, 33)$ and $(80, 66)$ in the grid coordinates. Within these two regions, the velocity magnitude is constrained to be zero. This constraint simulates the presence of impenetrable obstacles or zones where the fluid is known to be static. The mPOLICE algorithm is employed to enforce this zero-velocity condition strictly.

\paragraph{Network Architecture and Training Hyperparameters.}
The neural network used is an \texttt{AFFIRMNetwork} with 2 input dimensions (scaled x, y coordinates), a hidden dimension of 128, and 3 hidden layers, using Leaky ReLU activations.

The training process consists of two phases:
\begin{itemize}
    \item \textbf{Initial Training (Baseline Model):} The network is first trained for 500 epochs on the entire dataset without explicit constraint enforcement. The learning rate is $0.001$, and a batch size of 256 is used with an Adam optimizer and MSE loss.
    \item \textbf{mPOLICE Fine-tuning (Constrained Model):} The baseline model's weights are then used as a starting point for fine-tuning with mPOLICE. Key hyperparameters for the mPOLICE enforcement and fine-tuning loop include:
    \begin{itemize}
        \item Violation Threshold ($\epsilon_{\text{target}}$): $1.0 \times 10^{-4}$. This is the target for the maximum allowable constraint violation in unscaled output space.
        \item Sign Assignment Method: Mean-based (average pre-activation over region vertices).
        \item Optimizer for fine-tuning: Adam with a learning rate of $1.0 \times 10^{-4}$ (0.1 times the initial learning rate).
        \item Initial Constraint Penalty Weight ($\lambda_{\text{constraint}}$): $1.0$.
        \item Maximum Constraint Penalty Weight: $100.0$.
        \item Penalty Multiplier: $1.5$.
        \item Minimum Fine-tuning Epochs (per outer iteration): 30.
        \item Maximum Fine-tuning Epochs (per outer iteration): 50.
        \item Patience (for early stopping within fine-tuning): 20 epochs.
        \item Enforcement Margin ($\delta$ for QP solver): $0.0$.
    \end{itemize}
\end{itemize}

\paragraph{Results Visualization.}
Figure~\ref{fig:fluid_flow_comparison} shows a comparison of the ground truth velocity field, the baseline model's prediction, and the mPOLICE constrained model's prediction across the entire interior domain. Figure~\ref{fig:fluid_flow_zoomed} provides zoomed-in views of the two constrained square regions for the same three fields, allowing for a closer inspection of how the zero-velocity constraint is met by the mPOLICE model.

\begin{figure}[ht]
    \centering
    \includegraphics[width=\textwidth]{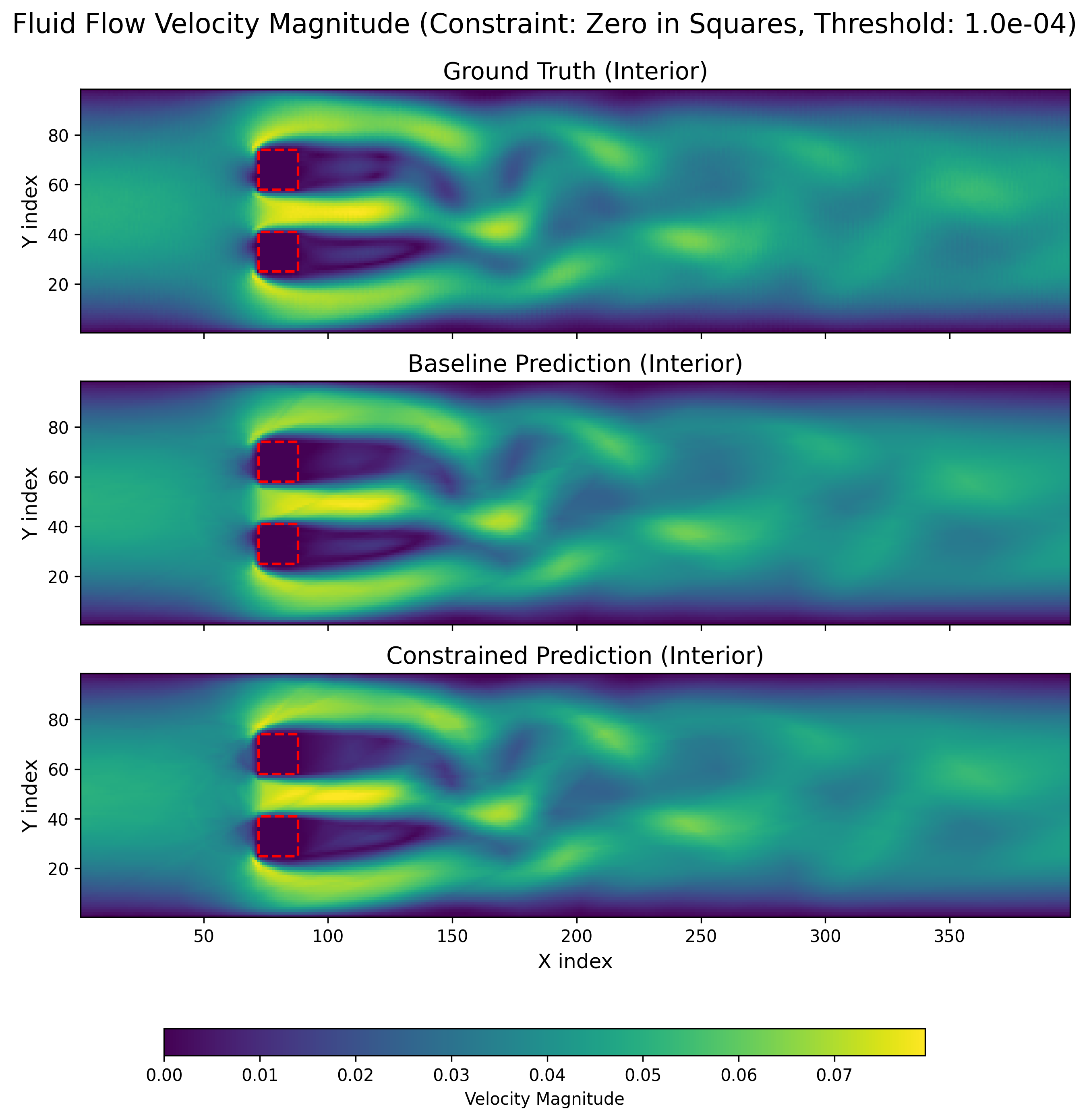}
    \caption{Comparison of fluid flow velocity magnitude fields: (Top) Ground Truth, (Middle) Baseline Prediction, and (Bottom) mPOLICE Constrained Prediction. The red dashed squares indicate the regions where zero velocity is enforced by mPOLICE. The color bar indicates velocity magnitude. Data shown is for the interior of the domain.}
    \label{fig:fluid_flow_comparison}
\end{figure}

\begin{figure}[]
    \centering
    \includegraphics[width=0.9\textwidth]{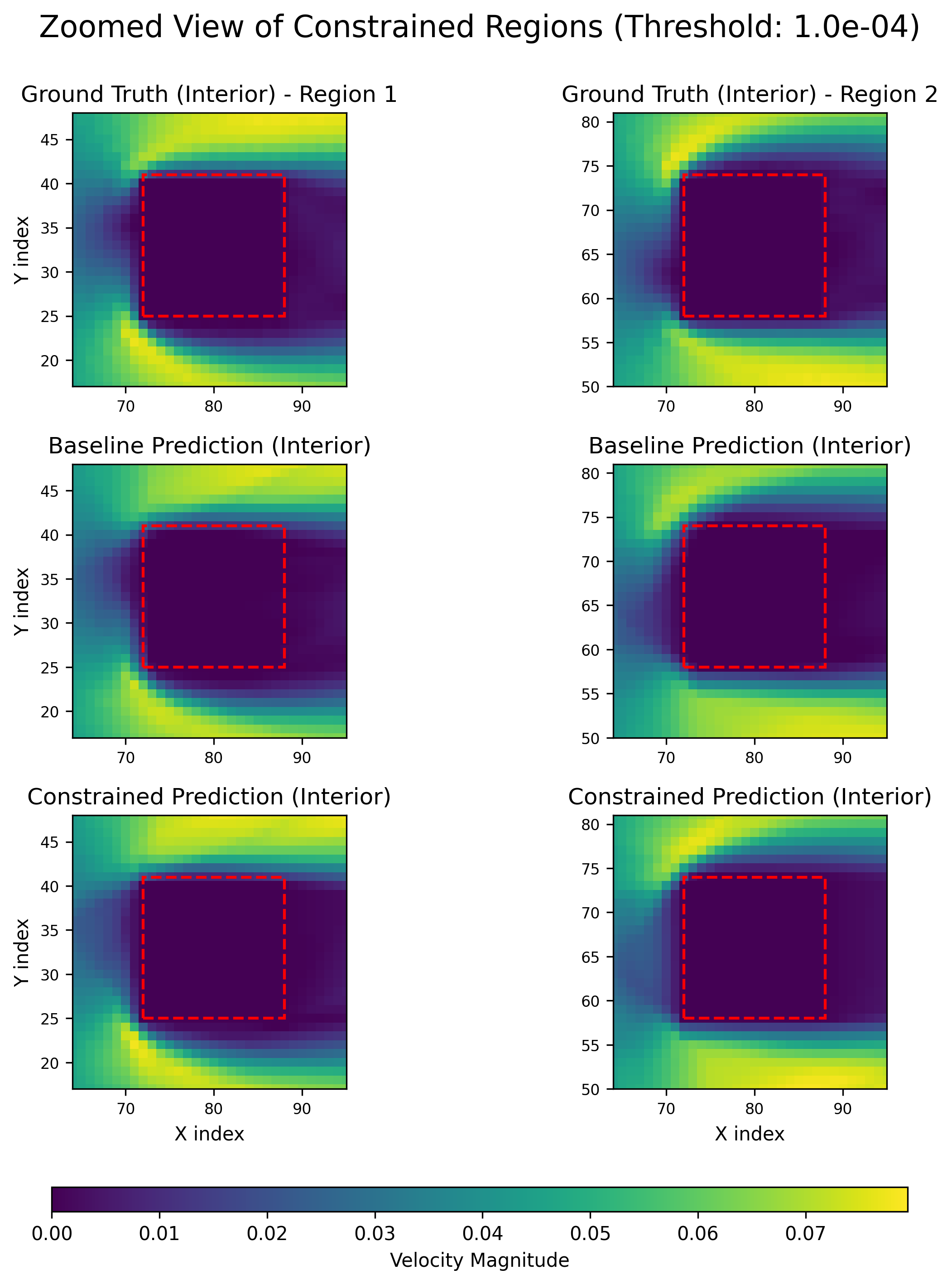}
    \caption{Zoomed-in views of the two constrained square regions for the fluid flow velocity magnitude. Each row corresponds to: (Top) Ground Truth, (Middle) Baseline Prediction, and (Bottom) mPOLICE Constrained Prediction. Left column shows Region 1 (centered near y=33), and right column shows Region 2 (centered near y=66). The red dashed squares delineate the $16 \times 16$ constraint areas. The mPOLICE predictions demonstrate adherence to the zero-velocity constraint within these squares, appearing as dark blue/purple matching the zero-point on the color scale.}
    \label{fig:fluid_flow_zoomed}
\end{figure}

\section{Scalability and Benchmarking}
\label{app:scalability_benchmarking}

The computational cost of mPOLICE during training is primarily driven by two components: assigning sign patterns to regions ($T_{\text{Assign}}$) and enforcing these patterns by adjusting network parameters ($T_{\text{Enforce}}$). The sign assignment step involves forward passes for vertices in each region and is generally very fast. The enforcement step, however, requires solving a quadratic program (QP) for each neuron in each layer whose parameters are being adjusted. This can be more computationally intensive. It's important to note that mPOLICE incurs \textbf{zero overhead during inference}, as all adjustments are made to the network parameters offline.

Our current implementation of the enforcement step is not highly optimized. For instance, the QPs are solved sequentially for each neuron on the CPU, which involves per-neuron overhead and potential communication costs if the main model resides on a GPU. Significant speedups could likely be achieved by parallelizing these QP solves (e.g., batching them or using specialized hardware if available) and by optimizing data transfers.

We conducted benchmarking experiments to assess how $T_{\text{Enforce}}$ scales with network architecture and the complexity of the constrained regions. Figure~\ref{fig:mpolice_benchmark_results} shows the enforcement time as a function of the number of hidden layers, for different network widths (number of neurons per layer) and total number of vertices defining the constrained regions. The parameters varied include:
\begin{itemize}
    \item \textbf{Number of Hidden Layers}: The x-axis represents the number of hidden layers whose parameters are subject to enforcement (e.g., 1, 2, or 3 hidden layers in our \texttt{AFFIRMNetwork} architecture).
    \item \textbf{Net\_Width}: Line colors differentiate network widths (64, 128, or 256 neurons per hidden layer).
    \item \textbf{N\_Regions (Total\_Vertices)}: Marker styles differentiate configurations based on the number of constrained regions, which directly correlates with the total number of vertices involved (e.g., 2 regions with 16 vertices each for 32 total vertices, up to 8 regions with 16 vertices each for 128 total vertices).
\end{itemize}

As observed in Figure~\ref{fig:mpolice_benchmark_results} and Table \ref{tab:timing_table}, $T_{\text{Enforce}}$ increases with:
\begin{itemize}
    \item \textbf{Network Depth (Number of Hidden Layers)}: More layers mean more sets of QPs to solve.
    \item \textbf{Network Width}: Wider layers have more neurons, each requiring a QP solve.
    \item \textbf{Total Number of Vertices}: This is a critical factor. Each QP's constraint set size depends on the total number of vertices across all regions that the neuron's sign pattern must satisfy. The figure clearly shows that configurations with more total vertices (e.g., the black square markers for 8 regions / 128 total vertices) exhibit the longest enforcement times, especially for wider and deeper networks. The enforcement time appears to scale significantly with the number of vertices, as this directly impacts the complexity of each QP.
\end{itemize}

While the scaling with the total number of vertices might seem concerning, it is often manageable in practice for several reasons:
\begin{enumerate}
    \item \textbf{Low-Dimensional Constraints}: Many practical applications involve constraints defined on relatively low-dimensional input subspaces or simple geometric shapes (lines, planes, hyper-rectangles, spheres). These shapes can often be adequately represented by a small number of vertices. For instance, a hyper-rectangle in $D$ dimensions has $2^D$ vertices, which is manageable for small $D$.
    \item \textbf{Focus of Constraints}: Constraints are often applied to specific, critical regions rather than exhaustively throughout a high-dimensional space.
    \item \textbf{No Inference Overhead}: The enforcement cost is a one-time (or periodic during fine-tuning) training cost. Once the network is trained, inference is standard and fast.
    \item \textbf{Optimization Potential}: As mentioned, our current implementation has significant room for optimization, which could substantially reduce these enforcement times.
\end{enumerate}
The detailed tabular results in Table \ref{tab:timing_table} further break down $T_{\text{Assign}}$ and $T_{\text{Enforce}}$. Absolute enforcement times are reported in seconds, as presenting them as a percentage of baseline training time can be misleading due to the high variability of baseline training durations across different tasks and datasets. 
$T_{\text{Assign}}$ remains negligible across all tested configurations, typically in the milliseconds. $T_{\text{Enforce}}$ can range from milliseconds for small networks and few vertices to several minutes for the largest configurations tested (e.g., 8 regions/128 vertices, 256 width, 3 hidden layers took $\sim$217 seconds). This highlights that while mPOLICE is provably effective, careful consideration of the number of vertices and network size is needed for computationally intensive scenarios, and further implementation optimizations would be beneficial.

\begin{table}[htbp]
    \centering
    \caption{Timing results (seconds) for mPOLICE's sign assignment ($T_{\text{Assign}}$) and sign enforcement ($T_{\text{Enforce}}$) procedures across various network configurations and region complexities. `Num\_Hidden\_L' refers to the number of hidden layers whose parameters are adjusted during enforcement.}
    \label{tab:timing_table}
\begin{tabular}{llllll}
\toprule
   N\_Regions &   Total\_Vertices &   Net\_Width &   Num\_Hidden\_L &   T\_Assign (s) &   T\_Enforce (s) \\
\midrule
            2 &                32 &           64 &                1 &          0.0036 &           0.0358 \\
            2 &                32 &           64 &                2 &          0.002  &           0.1913 \\
            2 &                32 &           64 &                3 &          0.0028 &           0.3057 \\
            2 &                32 &          128 &                1 &          0.0018 &           0.0605 \\
            2 &                32 &          128 &                2 &          0.0035 &           0.8552 \\
            2 &                32 &          128 &                3 &          0.0051 &           1.7041 \\
            2 &                32 &          256 &                1 &          0.0034 &           0.1651 \\
            2 &                32 &          256 &                2 &          0.0065 &           6.0467 \\
            2 &                32 &          256 &                3 &          0.0097 &          11.8718 \\
            4 &                64 &           64 &                1 &          0.0021 &           0.0347 \\
            4 &                64 &           64 &                2 &          0.0037 &           0.6893 \\
            4 &                64 &           64 &                3 &          0.0055 &           1.3718 \\
            4 &                64 &          128 &                1 &          0.0035 &           0.1008 \\
            4 &                64 &          128 &                2 &          0.0068 &           2.5825 \\
            4 &                64 &          128 &                3 &          0.0101 &           6.0731 \\
            4 &                64 &          256 &                1 &          0.0066 &           0.2108 \\
            4 &                64 &          256 &                2 &          0.0684 &          19.7345 \\
            4 &                64 &          256 &                3 &          0.0193 &          32.5451 \\
            8 &               128 &           64 &                1 &          0.0041 &           0.1349 \\
            8 &               128 &           64 &                2 &          0.0076 &           0.7988 \\
            8 &               128 &           64 &                3 &          0.011  &           1.8662 \\
            8 &               128 &          128 &                1 &          0.0071 &           0.316  \\
            8 &               128 &          128 &                2 &          0.0137 &           9.956  \\
            8 &               128 &          128 &                3 &          0.0202 &          29.6049 \\
            8 &               128 &          256 &                1 &          0.0132 &           1.1019 \\
            8 &               128 &          256 &                2 &          0.0258 &          59.4884 \\
            8 &               128 &          256 &                3 &          0.0385 &         217.526  \\
\bottomrule
\end{tabular}
\end{table}

\begin{figure}[ht]
    \centering
    \includegraphics[width=1.0\textwidth]{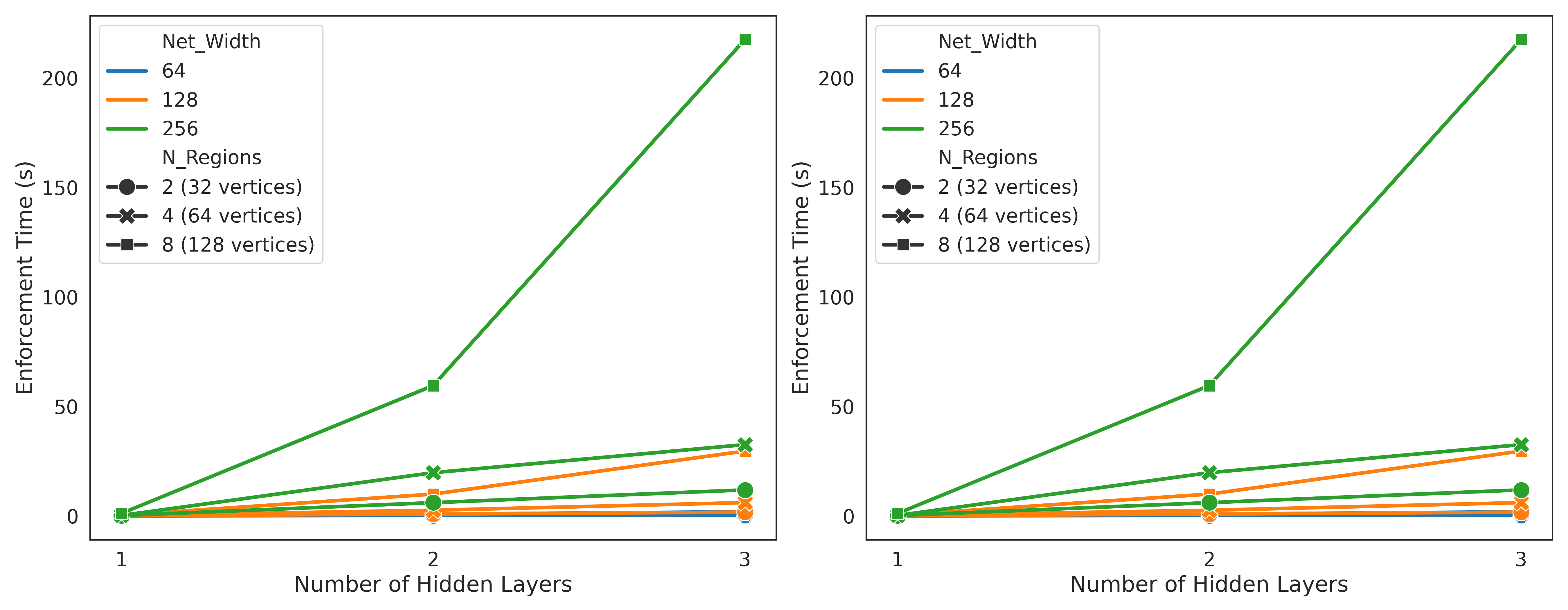}
    \caption{mPOLICE enforcement time ($T_{\text{Enforce}}$) as a function of the number of hidden layers subject to enforcement. Different line colors represent network widths (neurons per layer), and different marker styles represent the number of constrained regions (and thus total vertices). Enforcement time increases with network depth, width, and, most notably, with the total number of vertices being constrained.}
    \label{fig:mpolice_benchmark_results}
\end{figure}

\section{Illustrative Classification and Regression Examples}
\label{app:illustrative_examples_improved}

To further clarify the mechanics and effects of mPOLICE, this section presents simple, illustrative examples in standard classification and regression tasks. These low-dimensional problems are chosen for ease of visualization and to clearly demonstrate how mPOLICE enforces localized affine behavior and manages constraints in multiple regions.

\subsection{Illustrative Classification Task}
We demonstrate mPOLICE on a two-dimensional spiral classification task. The goal is to correctly classify points belonging to two intertwined spirals. To showcase multi-region constraint enforcement, we define two disjoint square regions along the arms of the spirals. Within each of these squares, we require the network's output logit to behave as an affine function. This is achieved by assigning a unique activation pattern to each square (using majority voting as described in Section~\ref{subsec:assigning_unique_patterns}) and then using mPOLICE's enforcement mechanisms. For this illustrative example, a simple network architecture consisting of a single hidden layer with 32 neurons and a linear output layer is used, trained with Binary Cross Entropy (BCE) loss.

The training process, depicted in Figure~\ref{fig:spiral}, begins with 30 epochs of unconstrained training to allow the network to learn the general decision boundary. Subsequently, mPOLICE's sign pattern and parameter adjustment steps are applied after each epoch. As shown in the loss curve (Figure~\ref{fig:spiral}, top right), enforcing the affine constraints initially causes a spike in the task loss. This occurs because the network parameters are abruptly changed to satisfy the hard affine constraints, potentially disrupting the previously learned function. However, the loss then steadily declines as the network adapts, balancing task performance with strict constraint adherence.

The polytope partitions induced by the network's ReLU activations are visualized using techniques from \citep{humayun2022splinecam,humayun2022exact} (Figure~\ref{fig:spiral}, left and bottom right). Before mPOLICE's fine-tuning begins (e.g., at epoch 30), multiple polytope boundaries (white lines) can be seen passing through the constrained square regions. After mPOLICE enforcement, each square region is contained entirely within a single, distinct affine polytope, with no internal polytope boundaries crossing them. The inset in Figure~\ref{fig:spiral} (bottom right, highlighted by a green dashed boundary) clearly shows one such constrained square residing in its own polytope, demonstrating localized affine behavior while the network remains non-linear elsewhere.

\begin{figure}[t]
    \centering
    \includegraphics[width=\textwidth]{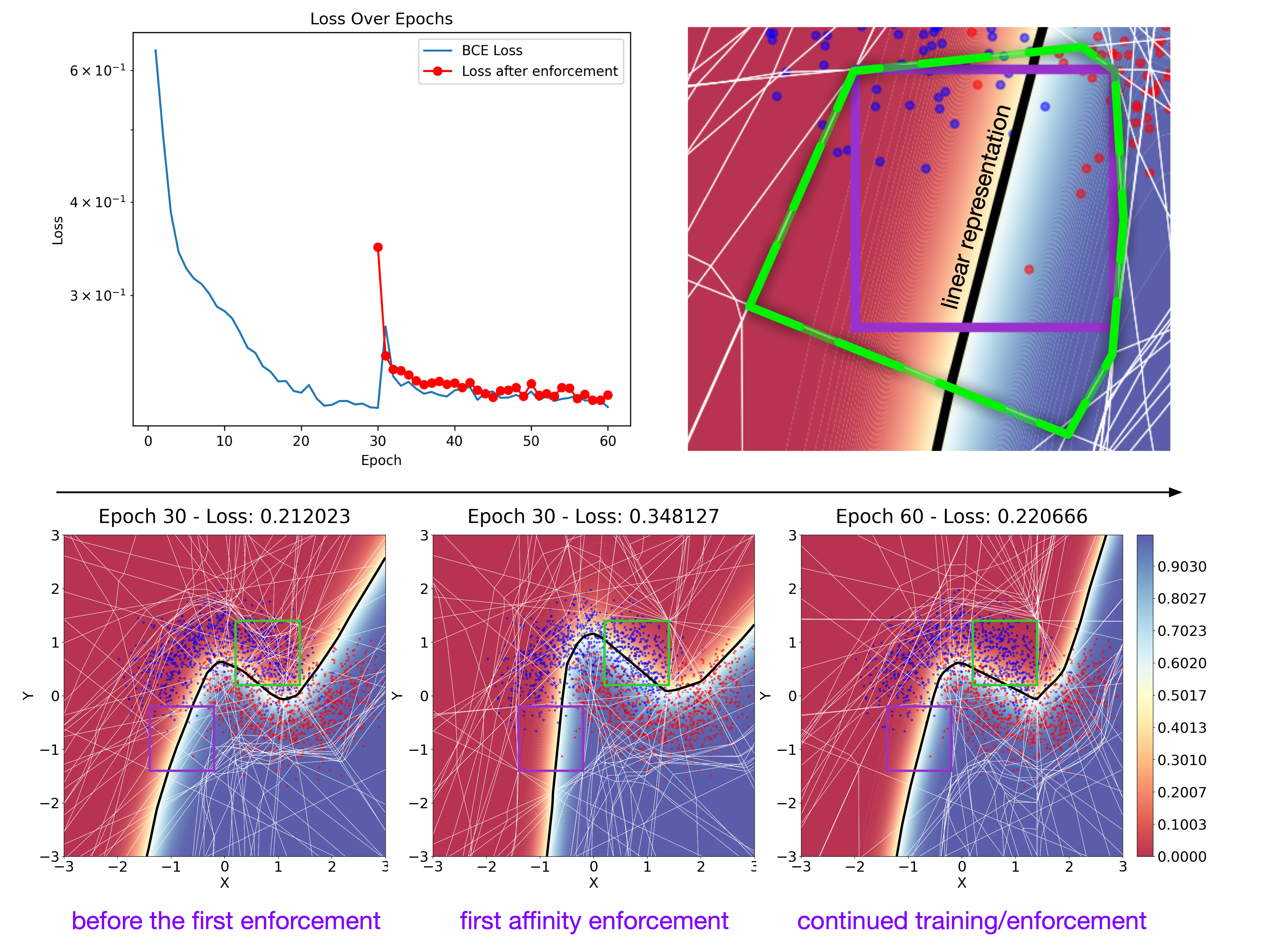}
    \caption{Multi-region constraint enforcement on a spiral classification task, with the loss curve initially spiking at epoch 30 after the first enforcement step but steadily converging over subsequent epochs. The violet square's convex polytope is highlighted with a green dashed boundary, illustrating how the network enforces affine behavior locally.}
    \label{fig:spiral}
\end{figure}

\subsection{Illustrative Regression Task}
For an illustrative regression example, a neural network is trained to approximate the function $f(x) = \sin(x)$ over the interval $[0, 2\pi]$. Two disjoint intervals are defined as constrained regions:
\begin{itemize}
    \item Region $R_1 = [\pi/3, 3\pi/4]$: The network output $f_{\boldsymbol{\theta}}(x)$ is constrained to be exactly equal to $\sin(\pi/3)$ (an equality constraint).
    \item Region $R_2 = [\pi + \pi/3, \pi + 3\pi/4]$: The network output $f_{\boldsymbol{\theta}}(x)$ is constrained to be less than or equal to $-0.5$ (an inequality constraint).
\end{itemize}
mPOLICE is employed by assigning unique activation patterns to $R_1$ and $R_2$. The network's weights and biases are then adjusted to ensure affine behavior within each region, such that the constraints are satisfied at their vertices. The training process follows Algorithm~\ref{alg:fine_tuning_mpolice}: after an initial "warm-up" phase of 2048 epochs where the network learns the general $\sin(x)$ trend (with constraint penalty $\lambda=0$), a fine-tuning phase begins. During fine-tuning, a composite loss $\mathcal{L}_{\text{total}} = \mathcal{L}_{\text{data}} + \lambda_{\text{constraint}}\mathcal{L}_{\text{constraint}}$ is minimized, where $\mathcal{L}_{\text{data}}$ is the Mean Squared Error (MSE) on unconstrained data, and $\mathcal{L}_{\text{constraint}}$ penalizes constraint violations at the vertices of $R_1$ and $R_2$. The fine-tuning aims to reduce the constraint violation below a predefined tolerance of $5 \times 10^{-4}$.

The results are presented in Figure~\ref{fig:constrained_sin_supplementary material} and Table~\ref{tab:constraint_time_comparison_supplementary material}. The mPOLICE-constrained network successfully adheres to the specified constraints: its output is constant at $\sin(\pi/3)$ within $R_1$ and remains at or below $-0.5$ within $R_2$. As shown in Table~\ref{tab:constraint_time_comparison_supplementary material}, the constraint violation for the enforced model drops to $0.000432$, satisfying the target threshold of $5 \times 10^{-4}$. This precision comes at the cost of a slight increase in MSE on the data outside the constrained regions (from $0.001895$ for the baseline to $0.002557$ for the constrained model), illustrating the typical trade-off. The mPOLICE fine-tuning and enforcement step constituted approximately $41\%$ of the total training time for the constrained model in this example. These results clearly demonstrate mPOLICE's ability to enforce distinct types of affine constraints in multiple regions with provable guarantees.

\begin{figure}[t]
\centering
\includegraphics[width=1.0\linewidth]{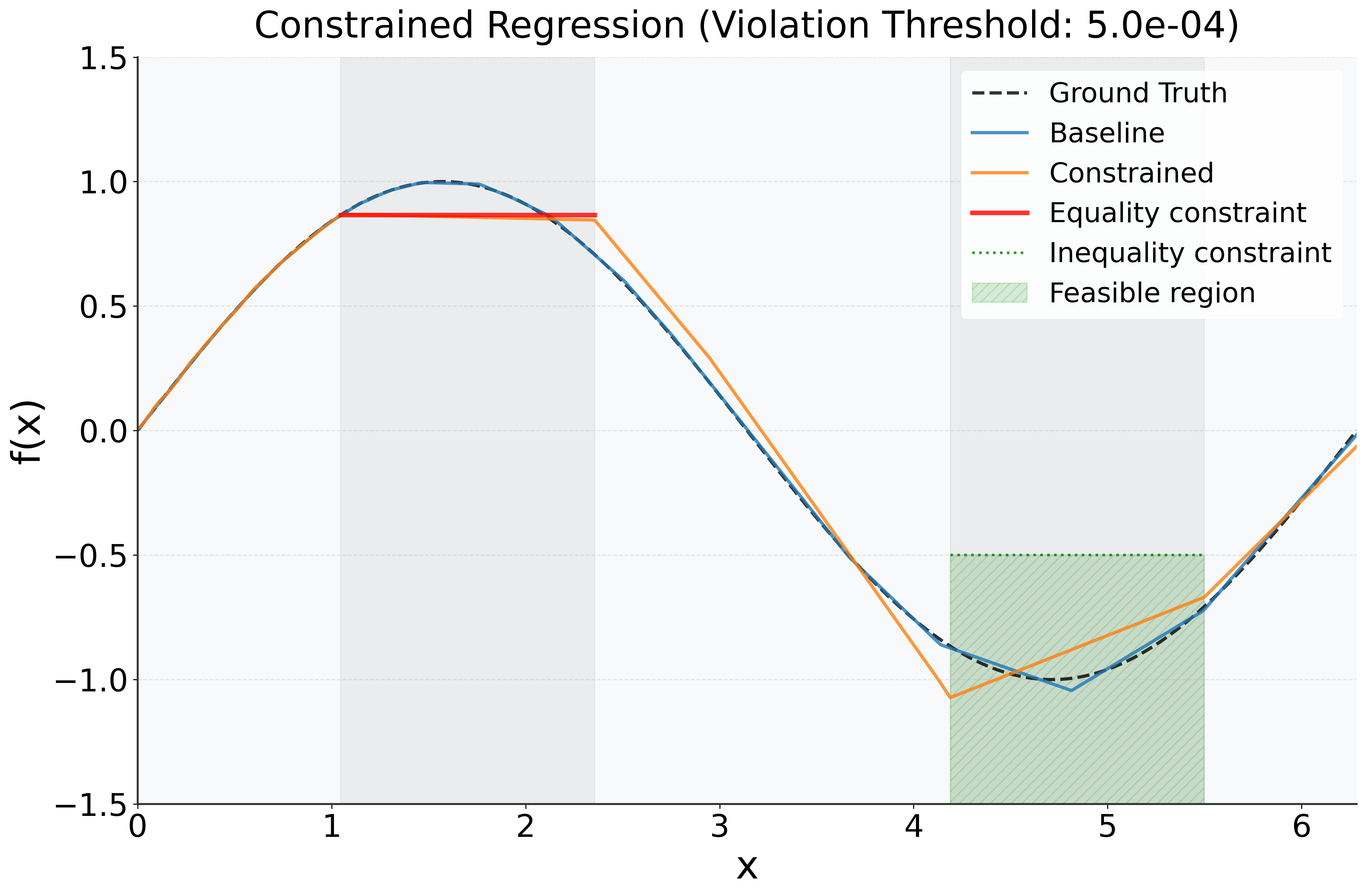}
\caption{Neural network approximation of constrained $\sin(x)$ with enforced affine constraints. This example illustrates the constraint enforcement on a regression task. The red line in $R_1$ shows the equality target, and the green line/shaded area in $R_2$ shows the inequality constraint and feasible region.}
\label{fig:constrained_sin_supplementary material}
\end{figure}

\begin{table}[ht]
    \centering
    \caption{Comparison of models based on Mean Squared Error (MSE) on data outside constrained regions, constraint violation, and runtime for the illustrative regression example. The violation threshold for mPOLICE fine-tuning was $5 \times 10^{-4}$.}
    \begin{tabular}{lccc}
        \hline
        Step      & MSE (excluding constraints)      & Violation  & Time (s) \\
        \hline
        Baseline    & 0.001895 & 0.001895   & 19.23    \\
        Enforcement & 0.002557 & 0.000432   &   13.57    \\
        \hline
    \end{tabular}
    \label{tab:constraint_time_comparison_supplementary material}
\end{table}

\end{document}